\journalname{Machine learning}
\let\oldendproof\endproof
\renewcommand{\endproof}{\qed\oldendproof}
\newcommand{\metagol}{Metagol}
\newcommand{\metaho}{Metagol$_{ho}$}
\newcommand{\namea}{Metagol$_{ho}$}
\newcommand{\nameb}{HEXMIL$_{ho}$}
\newcommand{\M}[2]{$\mathcal{M}^{#1}_{#2}$}
\newcommand{\tw}[1]{\texttt{#1}}
\newcommand{\hex}{HEXMIL}
\newcommand{\hexho}{\hex$_{ho}$}
\title{Learning higher-order logic programs}
\author{Andrew Cropper \and Rolf Morel \and Stephen Muggleton}
\institute{A. Cropper\at
              University of Oxford, UK\\
              \email{andrew.cropper@cs.ox.ac.uk}
            \and
            R. Morel\at
              University of Oxford, UK\\
              \email{rolf.morel@cs.ox.ac.uk}
            \and
                S. H. Muggleton \at
                Imperial College London, UK\\
              \email{s.muggleton@imperial.ac.uk}
}
\begin{document}

\maketitle

\begin{abstract}
A key feature of inductive logic programming (ILP) is its ability to learn first-order programs, which are intrinsically more expressive than propositional programs.
In this paper, we introduce techniques to learn higher-order programs.
Specifically, we extend meta-interpretive learning (MIL) to support learning higher-order programs by allowing for \emph{higher-order definitions} to be used as background knowledge.
Our theoretical results show that learning higher-order programs, rather than first-order programs, can reduce the textual complexity required to express programs which in turn reduces the size of the hypothesis space and sample complexity.
We implement our idea in two new MIL systems: the Prolog system \namea{} and the ASP system \nameb{}.
Both systems support learning higher-order programs and higher-order predicate invention, such as inventing functions for \tw{map/3} and conditions for \tw{filter/3}.
We conduct experiments on four domains (robot strategies, chess playing, list transformations, and string decryption) that compare learning first-order and higher-order programs.
Our experimental results support our theoretical claims and show that, compared to learning first-order programs, learning higher-order programs can significantly improve predictive accuracies and reduce learning times.
\end{abstract}
\section{Introduction}
\label{sec:intro}

Suppose you have intercepted encrypted messages and you want to learn a general decryption program from them.
Figure \ref{fig:introexs} shows such a scenario with three example encrypted/decrypted strings.
In this scenario the underlying encryption algorithm is a simple Caesar cipher with a shift of +1.
Given these examples, most inductive logic programming (ILP) approaches, such as meta-interpretive learning (MIL) \cite{mugg:metalearn,mugg:metagold}, would learn a recursive first-order program, such as the one shown in Figure \ref{fig:introfo}.
Although correct, this first-order program is overly complex in that most of the program is concerned with manipulating the input and output, such as getting the head and tail elements.
In this paper, we introduce techniques to learn higher-order programs that abstract away this boilerplate code.
Specifically, we extend MIL to support learning higher-order programs that use higher-order constructs such as \tw{map/3}, \tw{until/4}, and \tw{ifthenelse/5}.
Using this new approach, we can learn an equivalent\footnote{Success set equivalent when restricted to the target predicate \tw{decrypt/2}.} yet smaller decryption program, such as the one shown in Figure \ref{fig:introho}, which uses \tw{map/3} to abstract away the recursion and list manipulation.

\begin{figure}[ht]
\centering
\normalsize
\begin{tabular}{l|l}
\textbf{Encrypted} & \textbf{Decrypted}\\
\hline
\tw{joevdujwf} & \tw{inductive}\\
\tw{mphjd} & \tw{logic}\\
\tw{qsphsbnnjoh} & \tw{programming}
\end{tabular}
\caption{Example encrypted and decrypted messages.}
\label{fig:introexs}
\end{figure}

\begin{figure}[ht]
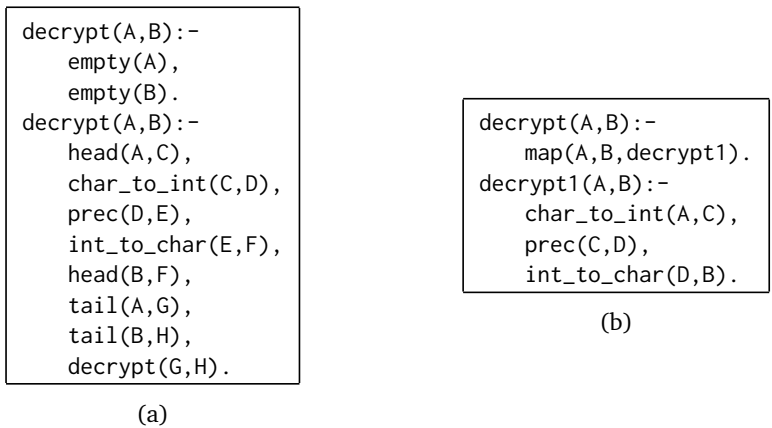

\normalsize
\begin{subfigure}[t]{.5\textwidth}
\centering
\normalsize
\begin{tabular}{|c|}
\hline
\begin{lstlisting}
decrypt(A,B):-
    empty(A),
    empty(B).
decrypt(A,B):-
    head(A,C),
    char_to_int(C,D),
    prec(D,E),
    int_to_char(E,F),
    head(B,F),
    tail(A,G),
    tail(B,H),
    decrypt(G,H).
\end{lstlisting} \\
\hline
\end{tabular}
\normalsize
\caption{}
\label{fig:introfo}
\end{subfigure}%
\begin{subfigure}[t]{.5\textwidth}
\centering
\begin{tabular}{|c|}
\hline
\begin{lstlisting}
decrypt(A,B):-
    map(A,B,decrypt1).
decrypt1(A,B):-
    char_to_int(A,C),
    prec(C,D),
    int_to_char(D,B).
\end{lstlisting} \\
\hline
\end{tabular}
\caption{}
\label{fig:introho}
\end{subfigure}
\caption{Decryption programs.
Figure (a) shows a first-order program.
Figure (b) shows a higher-order program, where \tw{decrypt1/2} is an invented predicate symbol.
The predicate \tw{prec/2} represents \emph{preceding/2}, i.e. the inverse of \emph{successor/2}.
The programs are success set equivalent when restricted to the target predicate \tw{decrypt/2} but the higher-order program is much smaller and requires half the number of literals (6 vs 12).
}
\label{fig:intro}
\end{figure}

\noindent
We claim that, compared to learning first-order programs, learning higher-order programs can improve learning performance.
We support our claim by showing that learning higher-order programs can reduce the textual complexity required to express programs which in turn reduces the size of the hypothesis space and sample complexity.

We implement our idea in \metaho{}, which extends Metagol \cite{metagol}, a MIL implementation based on a Prolog meta-interpreter.
\metaho{} extends Metagol to support \emph{interpreted} BK (IBK).
In this approach, meta-interpretation drives both the search for a hypothesis and predicate invention, allowing for higher-order arguments to be invented, such as the predicate \tw{decrypt1/2} in Figure \ref{fig:introho}.
The key novelty of \metaho{} is the combination of \emph{abstraction} (learning higher-order programs) and \emph{invention} (predicate invention), i.e. inventions inside of abstractions.
\metaho supports the invention of conditions and functions to an arbitrary depth, which goes beyond anything in the literature.
We also introduce \hexho{}, which likewise extends \hex{} \cite{hexmil}, an answer set programming (ASP) MIL implementation, to support learning higher-order programs.
As far as we are aware, \hexho{} is the first ASP-based ILP system that has been demonstrated capable of learning higher-order programs.

We further support our claim that learning higher-order programs can improve learning performance by conducting experiments in four domains: robot strategies, chess playing, list transformations, and string decryption.
The experiments compare the predictive accuracies and learning times when learning first and higher-order programs.
In all cases learning higher-order programs leads to substantial increases in predictive accuracies and lower learning times in agreement with our theoretical results.

Our main contributions are:
\begin{itemize}
\item We extend the MIL framework to support learning higher-order programs by extending it to support higher-order definitions (Section \ref{sec:amil}).
\item We show that the new higher-order approach can reduce the textual complexity of programs which in turn reduces the size of the hypothesis space and also sample complexity (Section \ref{sec:sc}).
\item We introduce \metaho{} and \hexho{} which extend Metagol and \hex{} respectively. Both systems support learning higher-order programs with higher-order predicate invention (Section \ref{sec:algos}).
\item We show that the ASP-based \hex{} and \hexho{} have an additional factor determining the size of their search space, namely the number of constants (Section \ref{sec:complex}).
\item We conduct experiments in four domains which show that, compared to learning first-order programs, learning higher-order programs can substantially improve predictive accuracies and reduce learning times (Section \ref{sec:experiments}).
\end{itemize}


\section{Related work}
\label{related}

\subsection{Program induction}
Program synthesis is the automatic generation of a computer program from a specification.
Deductive approaches \cite{mana:synthesis} \emph{deduce} a program from a full specification which precisely states the requirements and behaviour of the desired program.
By contrast, program induction approaches \emph{induce} (learn) a program from an incomplete specification, usually input/output examples.
Many program induction approaches learn specific classes of programs, such as string transformations \cite{flashfill}.
By contrast, MIL is general-purpose, shown capable of grammar induction \cite{mugg:metalearn}, learning robot strategies \cite{crop:metagolo}, and learning efficient algorithms \cite{crop:metaopt}.
In addition, MIL supports \emph{predicate invention}, which has been repeatedly stated as an important challenge in ILP \cite{mugg:cigol,stahl:pi,mlj:ilp20}.
The idea behind predicate invention is for an ILP system to introduce new predicate symbols to improve learning performance. In program induction, predicate invention can be seen as inventing auxiliary functions/predicates, as one does when manually writing a program, for example to reduce code duplication or to improve the readability of a program.

\subsection{Inductive functional programming}
Functional program induction approaches often support learning higher-order programs.
MagicHaskeller \cite{magichaskeller} is a general-purpose system which learns Haskell functions by selecting and instantiating higher-order functions from a pre-defined vocabulary.
Igor2 \cite{igor2} also learns recursive Haskell programs and supports auxiliary function invention but is restricted in that it requires the first $k$ examples of a target theory to generalise over a whole class.
The \emph{L2} system \cite{feser:pldi15} synthesises recursive functional algorithms.
The \emph{MYTH} \cite{OseraZ15} and \emph{MYTH2} \cite{FrankleOWZ16} systems use type systems to synthesise programs.
Frankle et al.~\cite{FrankleOWZ16} show how example-based specifications can be turned into type specifications.
In this work we go beyond these approaches by (1) learning higher-order programs with invented predicates, (2) giving theoretical justifications and conditions for when learning higher-order programs can improve learning performance (Section \ref{sec:sc}), and (3) experimentally demonstrating that learning higher-order programs can improve learning performance.

\subsection{Inductive logic programming}
ILP systems, including the popular systems FOIL \cite{foil}, Progol \cite{mugg:progol}, ALEPH \cite{aleph}, and TILDE \cite{tilde}, usually learn first-order programs.
Given appropriate mode declarations \cite{mugg:progol} for higher-order predicates such as \tw{map/3}, Progol and Aleph could learn higher-order programs such as \tw{f(A,B):-map(A,B,f1)}.
However, because Progol and Aleph do not support predicate invention they would be unable to invent the predicate \tw{f1/2} in the above example.
Similarly, existing MIL implementations, such as Metagol, could learn a similar program to the one above when \tw{map/3} is provided as background knowledge.
However, even though Metagol supports predicate invention, it is unable to invent the predicate \tw{f1/2} in the example above because Metagol deductively proves BK by delegating the proofs to Prolog.
To overcome this limitation we introduce the notion of interpreted BK (IBK), where \tw{map/3} can be defined as IBK.
The new MIL system \metaho{} proves IBK through meta-interpretation, which allows for predicate arguments such as \tw{f1/2} to be invented.

\subsection{Meta-interpretive learning}
MIL was originally based on a Prolog meta-interpreter, although the MIL problem has also been encoded as an ASP problem \cite{hexmil}.
The key difference between a MIL learner and a standard Prolog meta-interpreter is that whereas a standard Prolog meta-interpreter attempts to prove a goal by repeatedly fetching first-order clauses whose heads unify with a given goal, a MIL learner additionally attempts to prove a goal by fetching higher-order existentially quantified formulas, called \emph{metarules}, supplied as BK, whose heads unify with the goal.
The resulting predicate substitutions are saved and can be reused later in the proof.
Following the proof of a set of goals, a logic program is induced by projecting the predicate substitutions onto their corresponding metarules.
A key feature of MIL is the support for predicate invention.
MIL uses predicate invention for automatic problem decomposition.
As we will demonstrate, the combination of predicate invention and abstraction leads to compact representations of complex programs.


Cropper and Muggleton \cite{crop:metafunc} introduced the idea of using MIL to learn higher-order programs by using IBK.
This paper is an extended version of that paper.
In addition, we go beyond that work in several ways.
First, we generalise their preliminary theoretical results, principally in Section \ref{sec:sc}.
We also provide more explanation as to why abstracted MIL can improve learning performance compared to unabstracted MIL (end of Section \ref{sec:sc}).
Second, we introduce the \hexho{} system, which, as mentioned, extends HEXMIL to support learning higher-order programs with higher-order predicate invention.
Our motivation for this extension is to show the generality of our work, i.e. to demonstrate that it is not specific to Metagol and Prolog.
We also study the computational complexity of both \metaho{} and \hexho{}.
We show that the ASP approach is highly sensitive to the number of constant symbols, which leads to scalability issues.
Furthermore, we corroborate the experimental results of Cropper and Muggleton by repeating the robot waiter, chess, and list transformation experiments with \metaho{}.
We provide additional experimental evidence by repeating the experiments with \hexho{}.
Finally, we add further evidence by conducting a new experiment on the string decryption problem mentioned in the introduction.

\subsection{Higher-order logic}
McCarthy \cite{DBLP:conf/mi/McCarthy95} advocated using higher-order logic to represent knowledge.
Similarly, Muggleton et al.~\cite{mlj:ilp20} argued that using higher-order representations in ILP provides more flexible ways of representing BK.
Lloyd \cite{lloyd:logiclearning} used higher-order logic in the learning process but the approach focused on learning functional programs and did not support predicate invention.
Early work in ILP \cite{flener99,raedt:clint,emde:metarules} used higher-order formulae to specify the overall form of programs to be learned, similar to how MIL uses metarules.
However, these works did not consider learning higher-order programs.
By contrast, we use higher-order logic as a learning representation and to represent learned hypotheses.
Feng and Muggleton \cite{mugg:hol-inductive-generalisation} investigated inductive generalisation in higher-order logic using a restricted form of lambda calculus.
However, their approach does not support first-order nor higher-order predicate invention.
By contrast, we introduce higher-order definitions which allow for invented predicate symbols to be used as arguments in literals.

\subsection{Abstraction and invention}
Predicate invention has been repeatedly stated as an important challenge in ILP \cite{mugg:cigol,stahl:pi,mlj:ilp20}.
Popular ILP systems, such as FOIL, Progol, and ALEPH, do not support predicate invention, nor do most program induction systems.
Meta-level abduction \cite{inoue:mla} uses abduction and meta-level reasoning to invent predicates that represent propositions.
By contrast, MIL uses abduction to invent predicates that represent relations, i.e. relations that are not in the initial BK nor in the examples.
For instance, MIL was shown \cite{mugg:metagold} able to invent a predicate corresponding to the \emph{parent/2} relation when learning a \emph{grandparent/2} relation.
In this paper we extend MIL and the associated Metagol implementation to support higher-order predicate invention for use in higher-order constructs, such as \tw{map/3}, \tw{reduce/3}, and \tw{fold/4}.
This approach supports a form of abstraction which goes beyond typical first-order predicate invention \cite{saitta2013abstraction} in that the use of higher-order definitions combined with meta-interpretation drives both the search for a hypothesis and predicate invention, leading to more accurate and compact programs.
\section{Theoretical framework}
\label{framework}

\subsection{Preliminaries}

We assume familiarity with logic programming.
However, we restate key terminology.
Note that we focus on learning function-free logic programs, so we ignore terminology to do with function symbols.
We denote the predicate and constant signatures as $\mathcal{P}$ and $\mathcal{C}$ respectively.
A variable is first-order if it can be bound to a constant symbol or another first-order variable.
A variable is higher-order if it can be bound to a predicate symbol or another higher-order variable.
We denote the sets of first-order and higher-order variables as $\mathcal{V}_1$ and $\mathcal{V}_2$ respectively.
A term is a variable or a constant symbol.
A term is ground if it contains no variables.
An atom is a formula $p(t_1,\dots , t_n)$, where $p$ is a predicate symbol of arity $n$ and each $t_i$ is a term.
An atom is ground if all of its terms are ground.
A higher-order term is a higher-order variable or a predicate symbol. An atom is higher-order if it has at least one higher-order term.
A literal is an atom $A$ (a positive literal) or its negation $\neg A$ (a negative literal).
A clause is a disjunction of literals.
The variables in a clause are universally quantified.
A Horn clause is a clause with at most one positive literal.
A definite clause is a Horn clause with exactly one positive literal.
A clause is higher-order if it contains at least one higher-order atom.
A logic program is a set of Horn clauses.
A logic program is higher-order if it contains at least one higher-order Horn clause.

\subsection{Abstracted meta-interpretive learning}
\label{sec:amil}

We extend MIL to the higher-order setting. We first restate metarules \cite{crop:thesis}:

\begin{definition}[\textbf{Metarule}]
\label{def:metarule}
A metarule is a higher-order formula of the form:
$$\exists \pi \forall \mu \;\; l_0 \leftarrow l_1,\dots,l_m$$
\noindent
where each $l_i$ is a literal, $\pi \subseteq \mathcal{V}_1 \cup \mathcal{V}_2$, $\mu \subseteq \mathcal{V}_1 \cup \mathcal{V}_2$, and $\pi$ and $\mu$ are disjoint.
\end{definition}

\noindent
In contrast to a higher-order Horn clause, in which all the variables are all universally quantified, the variables in a metarule can be quantified universally or existentially\footnote{Existentially quantified first-order variables do not appear in this work, but do in existing work on MIL \cite{crop:datacurate}.}.
When describing metarules, we omit the quantifiers.
Instead, we denote existentially quantified higher-order variables as uppercase letters starting from $P$ and universally quantified first-order variables as uppercase letters starting from $A$.
Figure \ref{fig:metarules} shows example metarules.

\begin{figure}[ht]
\centering
\normalsize
\begin{tabular}{|l|l|}
\hline
\textbf{Name} & \textbf{Metarule}\\ \hline
ident & $P(A,B) \leftarrow Q(A,B)$\\
precon & $P(A,B) \leftarrow Q(A),R(A,B)$\\
curry & $P(A,B) \leftarrow Q(A,B,R)$\\
chain & $P(A,B) \leftarrow Q(A,C), R(C,B)$\\
\hline
\end{tabular}
\caption{Example metarules. The letters $P$, $Q$, and $R$ denote existentially quantified higher-order variables. The letters $A$, $B$, and $C$ denote universally quantified first-order variables.}
\label{fig:metarules}
\end{figure}

To extend MIL to support learning higher-order programs we introduce higher-order definitions:

\begin{definition}[\textbf{Higher-order definition}]
\label{def:hodef}
A higher-order definition is a set of higher-order Horn clauses where the head atoms have the same predicate symbol.
\end{definition}

\noindent
Three example higher-order definitions are:

\begin{example}[\textbf{Map definition}]
\label{example1}
\begin{center}
  \begin{tabular}{l}
    map([],[],F) $\leftarrow$\\
    map([A|As],[B|Bs],F) $\leftarrow$
      F(A,B),
      map(As,Bs)
  \end{tabular}
\end{center}
\end{example}

\noindent
In Example \ref{example1} the symbol $F$ is a universally quantified higher-order variable.
The other variables are universally quantified first-order variables.

\begin{example}[\textbf{Until definition}]
\label{example2}
\begin{center}
  \begin{tabular}{l}
    until(A,A,Cond,F) $\leftarrow$ Cond(A)\\
    until(A,B,Cond,F) $\leftarrow$ not(Cond(A)), F(A,C), until(C,B,Cond,F)
  \end{tabular}
\end{center}
\end{example}


\begin{example}[\textbf{Fold definition}]
\label{example3}
\begin{center}
  \begin{tabular}{l}
    fold([],Acc,Acc,F) $\leftarrow$\\
    fold([A|As],Acc1,B,F) $\leftarrow$ F(A,Acc1,Acc2), fold(As,Acc2,B,F)
  \end{tabular}
\end{center}
\end{example}

\noindent
We frequently refer to \emph{abstractions}.
In computer science code abstraction \cite{cardelli:abstraction} involves hiding complex code to provide a simpler interface.
In this work, we define an \emph{abstraction} as a higher-order Horn clause that contains at least one atom which takes a predicate symbol an argument.
In the following abstraction example, the final argument of $map/3$ is ground to the predicate symbol $succ/2$:

\begin{example}[\textbf{Abstraction}]
  \begin{tabular}{l}
    f(A,B) $\leftarrow$ map(A,B,succ)
  \end{tabular}
\end{example}

\noindent
Likewise, in the higher-order decryption program in the introduction (Figure \ref{fig:introho}), the final argument of \tw{map/3} is ground to the predicate symbol \tw{decrypt1/2}.


We define the abstracted MIL input, which extends a standard MIL input \cite{crop:thesis} (and problem) to support higher-order definitions:

\begin{definition}[\textbf{Abstracted MIL input}]
\label{def:milinput}
An abstracted MIL input is a tuple $(B,E^+,E^-,M)$ where:
\begin{itemize}
    \item $B=B_C \cup B_I$ where $B_C$ is a set of Horn clauses and $B_I$ is (the union of) a set of higher-order definitions
    \item $E^+$ and $E^-$ are disjoint sets of ground atoms representing positive and negative examples respectively
    \item $M$ is a set of metarules.
\end{itemize}
\end{definition}

\noindent
There is little declarative difference between $B_C$ and $B_I$.
There is, however, a procedural difference between the two.
We therefore call $B_C$ \emph{compiled} BK and $B_I$ \emph{interpreted} BK (IBK).
The procedural distinction between $B_C$ and $B_I$ is that whereas a clause from $B_C$ is proved deductively (by calling Prolog), a clause from $B_I$ is proved through meta-interpretation, which allows for predicate invention to be combined with abstractions to invent higher-order predicates.
The distinction between $B_I$ and $M$ is that the clauses in $B_I$ are all universally quantified, whereas the metarules in $M$ contain existentially quantified variables whose substitutions form the induced program.
We discuss these distinctions in more detail in Section \ref{sec:algos} when we describe the MIL implementations.


We define the abstracted MIL problem:

\begin{definition}[\textbf{Abstracted MIL problem}]
\label{def:milproblem}
Given an abstracted MIL input $(B,E^+,E^-,M)$, the abstracted MIL problem is to return a logic program hypothesis $H$ such that:
\begin{itemize}
  \item $\forall h \in H, \exists m \in M$ such that $h=m\theta$, where $\theta$ is a substitution that grounds all the existentially quantified variables in $m$
  \item $H \cup B \models E^{+}$
  \item $H \cup B \not\models E^{-}$
\end{itemize}
We call $H$ a solution to the MIL problem.
\end{definition}

\noindent
The first condition ensures that a logic program hypothesis is an instance of the given metarules.
It is this condition that enforces the strong inductive bias in MIL.

MIL supports inventions:

\begin{definition}[\textbf{Invention}]
\label{mil:def:inventions}
Let $(B,E^+,E^-,M)$ be a MIL input and $H$ be a solution to the MIL problem.
Then a predicate symbol $p/a$ is an \emph{invention} if and only if it is in the predicate signature (i.e. the set of all predicate symbols with their associated arities) of $H$ and not in the predicate signature of $B \cup E^+ \cup E^-$.
\end{definition}

\noindent
A MIL learner uses abstractions to generate inventions:

\begin{example}[\textbf{Invention}]
\begin{center}
\begin{tabular}{l}
  f(A,B) $\leftarrow$ map(A,B,f1)\\
  f1(A,B) $\leftarrow$ succ(A,C),succ(C,B)
\end{tabular}
\end{center}
\end{example}

\noindent
In this program, a MIL learner has invented the predicate \tw{f1/2} for use in a \tw{map/3} definition.
Likewise, in the higher-order decryption program in the introduction (Figure \ref{fig:introho}), the final argument of \tw{map/3} is ground to the invented predicate symbol \tw{decrypt1/2}.


\subsection{Language classes, expressivity, and complexity}
\label{sec:sc}

We claim that increasing the expressivity of MIL from learning first-order programs to learning higher-order programs can improve learning performance.
We support this claim by showing that learning higher-order programs can reduce the size of the hypothesis space which in turn reduces sample complexity and expected error.
In MIL the size of the hypothesis space is a function of the number of metarules $m$ and their form, the number of background predicate symbols $p$, and the maximum program size $n$ (the maximum number of clauses allowed in a program).
We restrict metarules by their body size and literal arity:

\begin{definition}[\textbf{Metarule fragment \M{i}{j}}]
\label{def:ham}
A metarule is in the fragment \M{i}{j} if it has at most $j$ literals in the body and each literal has arity at most $i$.
\end{definition}

\noindent
For instance, the \emph{chain} metarule in Figure \ref{fig:metarules} restricts clauses to be definite with two body literals of arity two, i.e. is in the fragment \M{2}{2}.
By restricting the form of metarules we can calculate the size of a MIL hypothesis space.
The following result is essentially the same as in \cite{crop:dreduce}.
The only difference is that we drop the redundant Big O notation:

\begin{proposition}[\textbf{MIL hypothesis space}]
\label{prop:hs1}
Given $p$ predicate symbols and $m$ metarules in \M{i}{j}, the number of programs expressible with $n$ clauses is at most $(mp^{j+1})^n$.
\end{proposition}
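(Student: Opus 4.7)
The plan is to prove the bound by a direct counting argument over the space of programs built from metarules by existential substitution. I would first fix a single metarule $r \in M$ in the fragment \M{i}{j} and count the clauses obtainable from $r$ by substituting predicate symbols for its existentially quantified higher-order variables. Since $r$ has at most $j+1$ atoms in total (one head plus at most $j$ body literals), and since the existentially quantified variables in a metarule correspond to the predicate symbols at these atom heads, there are at most $j+1$ predicate symbols to fix. Each may be instantiated with any of the $p$ available predicate symbols, giving at most $p^{j+1}$ distinct instances of $r$. The universally quantified first-order variables do not contribute additional choices because they are shared across every instance of the clause, and so they do not produce new clauses for the purposes of counting hypotheses.

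Next I would aggregate across the $m$ metarules in $M$. Since each metarule contributes at most $p^{j+1}$ instantiated clauses, the total number of distinct clauses available to the learner is bounded by $m \cdot p^{j+1}$. By Definition \ref{def:milproblem}, a hypothesis $H$ is a set of such instantiated clauses, so a hypothesis of size at most $n$ is obtained by choosing $n$ clauses (with repetition, treating the selection as a sequence) from this pool. The number of such sequences is at most $(m p^{j+1})^n$, which establishes the stated bound; the bound is an overestimate because it counts orderings and repetitions that correspond to the same set of clauses, but this only strengthens it as an upper bound.

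The main subtlety I expect to navigate, rather than a genuine obstacle, is the treatment of higher-order arguments appearing inside literals (as in the \emph{curry} metarule $P(A,B) \leftarrow Q(A,B,R)$, where $R$ is also an existentially quantified higher-order variable). Strictly speaking, such a variable is an additional predicate-valued slot, not a head predicate of an atom, so one should verify that the count $p^{j+1}$ still dominates in the intended setting, or else absorb these arguments into the bound via the arity parameter $i$. The cleanest resolution is to observe that the proposition as stated counts only predicate-symbol substitutions at atom heads, and that higher-order argument positions can be treated uniformly by viewing each such position as an extra factor of $p$; since the result is framed as an upper bound and follows the first-order development of \cite{crop:dreduce}, I would justify the simplified form $(mp^{j+1})^n$ by noting it is tight in the first-order case and remains a valid (if slightly loose) upper bound whenever the fragment \M{i}{j} is used as the counting parameter.
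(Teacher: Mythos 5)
Your counting argument is essentially identical to the paper's proof: at most $j+1$ existentially quantified predicate variables per \M{i}{j} metarule yield at most $p^{j+1}$ clause instances, hence $mp^{j+1}$ clauses over $m$ metarules and $(mp^{j+1})^n$ programs of $n$ clauses. The subtlety you raise about extra higher-order argument positions (as in \emph{curry}) is not folded into this bound in the paper either; it is handled separately in Proposition \ref{prop:hs2}, which adds a factor $p^{k}$ for the $k$ additional existentially quantified higher-order variables, so Proposition \ref{prop:hs1} should be read as applying to metarules without such extra variables.
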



\begin{proof}
The number of clauses which can be constructed from a \M{i}{j} metarule given $p$ predicate symbols is at most $p^{j+1}$ because for a given metarule there are at most $j+1$ predicate variables with at most $p^{j+1}$ possible substitutions.
Therefore the number of clauses that can be formed from $m$ distinct metarules in \M{i}{j} using $p$ predicate symbols is at most $mp^{j+1}$.
It follows that the number of programs which can be formed from a selection of $n$ such clauses is at most $(mp^{j+1})^n$.
\end{proof}

\noindent
Proposition \ref{prop:hs1} shows that the MIL hypothesis space grows exponentially both in the size of the target program and the number of body literals in a clause.
For instance, for the \M{2}{2} fragment, the MIL hypothesis space contains at most $(mp^3)^n$ programs, where $m$ is the number of metarules and $n$ is the number of clauses in the target program.




We update this bound for the abstracted MIL framework:


\begin{proposition}[\textbf{Number of abstracted \M{i}{j} programs}]
\label{prop:hs2}
Given $p$ predicate symbols and $m$ metarules in \M{i}{j} with at most $k$ additional existentially quantified higher-order variables,
the number of abstracted \M{i}{j} programs expressible with $n$ clauses is at most $(mp^{j+1+k})^n$.
\end{proposition}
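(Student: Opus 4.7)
The plan is to mirror the counting argument used in the proof of Proposition \ref{prop:hs1}, but to enlarge the per-metarule substitution count to account for the additional existentially quantified higher-order variables that can occur as arguments inside literals (e.g.\ the $R$ in $P(A,B) \leftarrow Q(A,B,R)$ from the \emph{curry} metarule of Figure \ref{fig:metarules}).

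First, I would fix an arbitrary metarule in $\mathcal{M}^i_j$ with at most $k$ additional existentially quantified higher-order variables and count the number of distinct clauses obtainable from it by grounding all its existentially quantified variables. As in Proposition \ref{prop:hs1}, the head and body literals contribute at most $j+1$ predicate variables sitting in the predicate-symbol position of each literal. The abstracted setting adds up to $k$ further existentially quantified higher-order variables that occur as \emph{arguments}. Since every such variable (whether in predicate-symbol position or in argument position) can be substituted by any of the $p$ predicate symbols from the signature, the metarule admits at most $p^{j+1+k}$ ground instances.

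Next, I would sum over the $m$ metarules: the total number of distinct clauses expressible as an instance of some metarule in the given set is at most $m \cdot p^{j+1+k}$. Finally, a hypothesis consists of $n$ such clauses selected (with repetition and in order a loose overcount) from this pool, yielding at most $(mp^{j+1+k})^n$ abstracted programs, as required.

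The only real subtlety is the bookkeeping for what counts as an "additional" higher-order variable: one must be careful not to double-count the $j+1$ predicate-symbol positions already tallied in Proposition \ref{prop:hs1}. I would state explicitly that the $k$ variables referred to in the hypothesis are precisely those appearing as arguments (and not as the predicate symbol) of some literal. Beyond that, the argument is essentially a one-line update of the previous combinatorial bound, so I do not expect any genuine obstacle.
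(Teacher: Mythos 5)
Your proposal is correct and follows essentially the same argument as the paper's own proof: each metarule contributes at most $j+1+k$ existentially quantified higher-order variables, hence at most $p^{j+1+k}$ clause instances, giving $mp^{j+1+k}$ clauses over $m$ metarules and $(mp^{j+1+k})^n$ programs of $n$ clauses. Your explicit remark that the $k$ additional variables are those in argument position (so as not to double-count the $j+1$ predicate-symbol positions) is a clarification the paper leaves implicit, but the underlying counting is identical.
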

\begin{proof}
As with Proposition \ref{prop:hs1}, the number of clauses which can be constructed from a \M{i}{j} metarule given $p$ predicate symbols is at most $p^{j+1}$ because for a given metarule there are at most $j+1$ predicate variables with at most $p^{j+1}$ possible substitutions.
Given a metarule in \M{i}{j} with at most $k$ additional existentially quantified higher-order variables there are therefore potentially $j+1+k$ predicate variables with $p^{j+1+k}$ possible substitutions.
Therefore the number of clauses expressible with $m$ such metarules is at most $mp^{j+1+k}$.
By the same reasoning as for Proposition \ref{prop:hs1}, it follows that the number of programs which can be formed from a selection of $n$ such clauses is at most $(mp^{j+1+k})^n$.
\end{proof}

\noindent
We use this result to develop sample complexity \cite{mitchell:ml} results for unabstracted MIL:

\begin{proposition}[\textbf{Sample complexity of unabstracted MIL}]
\label{prop:sc1}
Given $p$ predicate symbols, $m$ metarules in \M{i}{j}, and a maximum program size $n_u$, unabstracted MIL has sample complexity:
$$s_u \geq  \frac{1}{\epsilon} (n_u\; \ln(m) + (j+1)n_u \;  \ln(p) + \ln(\frac{1}{\delta}))$$
\end{proposition}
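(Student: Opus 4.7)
The plan is to apply the classical Blumer bound for PAC learning a finite, consistent hypothesis class and then plug in the cardinality estimate from Proposition \ref{prop:hs1}. Recall that Blumer's bound states that, for any finite hypothesis class $\mathcal{H}$, a consistent learner achieves error at most $\epsilon$ with probability at least $1 - \delta$ provided the number of training examples is at least $\frac{1}{\epsilon}(\ln|\mathcal{H}| + \ln(1/\delta))$. So the proof reduces to substituting an upper bound on $\ln|\mathcal{H}|$ for the unabstracted MIL hypothesis class.

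First, I would identify $\mathcal{H}$ as the class of unabstracted MIL programs of size at most $n_u$, built from $p$ predicate symbols and $m$ metarules in \M{i}{j}. By Proposition \ref{prop:hs1}, $|\mathcal{H}| \leq (mp^{j+1})^{n_u}$. Taking logarithms yields $\ln|\mathcal{H}| \leq n_u \ln(m) + (j+1) n_u \ln(p)$. Substituting this into the Blumer bound immediately gives
$$s_u \;\geq\; \frac{1}{\epsilon}\Bigl(n_u \ln(m) + (j+1) n_u \ln(p) + \ln\tfrac{1}{\delta}\Bigr),$$
which is the desired inequality.

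The only remaining subtlety is justifying the framing: one needs to observe that MIL, when it returns a hypothesis entailing the positives and rejecting the negatives, is a consistent learner over a finite hypothesis class, so Blumer's theorem applies directly. I would state this briefly at the beginning of the proof, and cite the standard reference used elsewhere in the paper (for example \cite{mitchell:ml}) for the Blumer bound itself.

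There is no real obstacle here: the computation is mechanical once Proposition \ref{prop:hs1} is in hand. The only care required is to make sure the looseness in Proposition \ref{prop:hs1} (an upper bound rather than equality on $|\mathcal{H}|$) is preserved correctly through the logarithm and the subsequent inequality, which it is, since Blumer's bound only requires an upper bound on $\ln|\mathcal{H}|$.
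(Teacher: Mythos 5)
Your proposal is correct and follows essentially the same route as the paper's own proof: apply the Blumer bound for consistent learners, substitute the hypothesis-space size $(mp^{j+1})^{n_u}$ from Proposition \ref{prop:hs1}, and take logarithms. The extra remarks about consistency of the learner and the direction of the inequality are sound but not points the paper belabours.
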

\begin{proof}
According to the Blumer bound, which appears as a reformulation of Lemma 2.1 in \cite{blumer:bound}, the error of consistent hypotheses is bounded by $\epsilon$ with probability at least $(1-\delta)$ once $s_u \geq \frac{1}{\epsilon} (\ln(|H|) + \ln(\frac{1}\delta))$, where $|H|$ is the size of the hypothesis space.
From Proposition \ref{prop:hs1}, $|H| = (mp^{j+1})^{n_u}$ for unabstracted MIL.
Substituting and applying logs gives:
\[s_u \geq  \frac{1}{\epsilon} (n_u\; \ln(m) + (j+1)n_u \;  \ln(p) + \ln(\frac{1}{\delta}))\]
\end{proof}

\noindent
We likewise develop sample complexity results for abstracted MIL:

\begin{proposition}[\textbf{Sample complexity of abstracted MIL}]
\label{prop:sc2}
Given $p$ predicate symbols, $m$ metarules in \M{i}{j} augmented with at most $k$ higher-order variables, and a maximum program size $n_a$, abstracted MIL has sample complexity:
$$s_a \geq \frac{1}{\epsilon}(n_a \; \ln(m) + (j+1+k)n_a \; \ln(p) + \ln(\frac{1}{\delta}))$$
\end{proposition}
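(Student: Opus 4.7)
The plan is to follow exactly the same template as the proof of Proposition \ref{prop:sc1}, substituting the abstracted hypothesis-space bound from Proposition \ref{prop:hs2} in place of the unabstracted one from Proposition \ref{prop:hs1}. Since Proposition \ref{prop:sc1} is already an instance of the Blumer bound applied to a MIL hypothesis space, the only new ingredient is plugging in the larger count that reflects the $k$ extra existentially quantified higher-order variables per metarule.

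Concretely, I would first invoke the Blumer bound (Lemma 2.1 of \cite{blumer:bound}, as reformulated in Proposition \ref{prop:sc1}): for any finite hypothesis space $H$, a sample of size $s \geq \frac{1}{\epsilon}\bigl(\ln(|H|) + \ln(\tfrac{1}{\delta})\bigr)$ suffices to bound the error of any consistent hypothesis by $\epsilon$ with probability at least $1-\delta$. Next I would appeal directly to Proposition \ref{prop:hs2} to obtain $|H| \leq (mp^{j+1+k})^{n_a}$ for the abstracted MIL hypothesis space with the stated parameters. Substituting this into the Blumer bound gives
\[
s_a \;\geq\; \frac{1}{\epsilon}\Bigl(\ln\bigl((mp^{j+1+k})^{n_a}\bigr) + \ln(\tfrac{1}{\delta})\Bigr).
\]
Finally, I would expand the logarithm using $\ln(ab) = \ln a + \ln b$ and $\ln(a^c) = c\ln a$ to split the first term into $n_a \ln(m) + (j+1+k)n_a \ln(p)$, yielding exactly the claimed expression.

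There is no real obstacle here: the argument is mechanical once Proposition \ref{prop:hs2} is in hand, and the only mild subtlety worth flagging is ensuring that the Blumer bound is applied to the correct notion of hypothesis-space size, namely the number of distinct programs expressible with up to $n_a$ clauses under the abstracted fragment, which is precisely what Proposition \ref{prop:hs2} counts. The resulting statement should then be contrasted with Proposition \ref{prop:sc1} in the surrounding discussion to highlight that, although the per-clause factor grows from $(j+1)$ to $(j+1+k)$, the maximum program size $n_a$ needed to express the same target is typically much smaller than $n_u$, which is the mechanism by which abstraction reduces overall sample complexity.
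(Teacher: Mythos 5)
Your proposal is correct and follows exactly the route the paper takes: its proof of Proposition \ref{prop:sc2} is simply stated as ``analogous to Proposition \ref{prop:sc1} using Proposition \ref{prop:hs2}'', which is precisely the Blumer-bound substitution with $|H| = (mp^{j+1+k})^{n_a}$ that you carry out explicitly. No gaps; you have just written out the steps the paper leaves implicit.
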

\begin{proof}
Analogous to Proposition \ref{prop:sc1} using Proposition \ref{prop:hs2}.
\end{proof}

\noindent
We compare these bounds:

\begin{theorem}[\textbf{Unabstracted and abstracted bounds}]
\label{thm:ratio}
Let $m$ be the number of \M{i}{j} metarules, $n_u$ and $n_a$ be the minimum numbers of clauses necessary to express a target theory with unabstracted and abstracted MIL respectively, $s_u$ and $s_a$ be the bounds on the number of training examples required to achieve error less than $\epsilon$ with probability at least $1-\delta$ with unabstracted and abstracted MIL respectively, and $k\geq1$ be number of additional higher-order variables used by abstracted MIL.
Then $s_u > s_a$ when:
\[n_u - n_a > \dfrac{k}{j+1}n_a\]

\end{theorem}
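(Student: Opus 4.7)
The plan is to expand both sample complexity bounds from Propositions \ref{prop:sc1} and \ref{prop:sc2}, subtract them, and rearrange until the stated condition on $n_u - n_a$ falls out. First I would form the difference $s_u - s_a$; the $\ln(1/\delta)$ terms and the common factor $1/\epsilon$ drop out immediately, leaving
\[ \epsilon(s_u - s_a) = (n_u - n_a)\ln m + \bigl((j+1)n_u - (j+1+k)n_a\bigr)\ln p. \]
The second summand splits as $(j+1)(n_u - n_a)\ln p - k\,n_a\ln p$, so after collecting the $(n_u - n_a)$ factors I get
\[ \epsilon(s_u - s_a) = (n_u - n_a)\bigl(\ln m + (j+1)\ln p\bigr) - k\,n_a\ln p. \]

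Next, to obtain a condition that is clean and independent of $m$, I would drop the nonnegative $\ln m$ contribution on the left---valid since $m \geq 1$ implies $\ln m \geq 0$---and thus require only the strictly stronger inequality $(n_u - n_a)(j+1)\ln p > k\,n_a\ln p$. Dividing both sides by the positive quantity $(j+1)\ln p$ (positive under the mild assumptions $p \geq 2$ and $j \geq 0$) yields precisely $n_u - n_a > \frac{k}{j+1}\,n_a$. Hence this inequality on program sizes is sufficient to guarantee $s_u > s_a$, which is the claim.

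There is no genuine obstacle here; the argument is essentially algebraic manipulation of the two bounds, with the hypothesis $k \geq 1$ used only to rule out the degenerate case where abstracted and unabstracted MIL coincide. The only design choice worth flagging is that the stated condition is sufficient but not tight: retaining the dropped $\ln m$ term would give the looser requirement $(n_u - n_a)(\ln m + (j+1)\ln p) > k\,n_a\ln p$, which smaller differences $n_u - n_a$ can satisfy when many metarules are available. I would briefly remark on this trade-off, since the theorem's cleaner form has the advantage of being independent of both $m$ and $p$ and directly highlights the interplay between program-size savings and the number of extra higher-order variables.
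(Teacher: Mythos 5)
Your proof is correct and follows essentially the same route as the paper's: both reduce $s_u > s_a$ to the sufficient condition $(j+1)n_u > (j+1+k)n_a$ by observing that the $\ln m$ contribution works in the favourable direction once $n_u > n_a$, and then rearrange to obtain $n_u - n_a > \frac{k}{j+1}n_a$. Your explicit flagging of the implicit assumptions ($p \geq 2$ so $\ln p > 0$, $m \geq 1$ so $\ln m \geq 0$) and the remark on non-tightness are welcome refinements but do not change the argument.
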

\begin{proof}
From Proposition \ref{prop:sc1} it holds that:
\[s_u \geq  \frac{1}{\epsilon} (n_u\; \ln(m) + (j+1)n_u \;  \ln(p) + \ln(\frac{1}{\delta}))\]
From Proposition \ref{prop:sc2} it holds that:
\[s_a \geq \frac{1}{\epsilon}(n_a \; \ln(m) + (j+1+k)n_a \; \ln(p) + \ln\frac{1}{\delta})\]
If we cancel $\frac{1}{\epsilon}$ then $s_u > s_a$ follows from:
\[n_u \ln(m) + (j+1)n_u \ln(p) > n_a \ln(m) + (j+1+k)n_a \ln(p)\]
Because $k\geq1$, the inequality $s_u > s_a$ holds when:
\begin{equation} \tag{1}
n_u \ln(m) > n_a \ln(m)
\end{equation}
and:
\begin{equation} \tag{2}
(j+1)n_u \ln(p) > (j+1+k)n_a \ln(p)
\end{equation}
Because $k \geq 1$ the inequality (2) implies the inequality (1).
The inequality (2) holds when $(j+1)n_u > (j+1+k)n_a$.
Therefore $s_u > s_a$ follows from $(j+1)n_u > (j+1+k)n_a$.
Rearranging terms leads to $s_u > s_a$ when $n_u - n_a > \frac{k}{j+1}n_a$.
\end{proof}

\noindent
The results from this section motivate the use of abstracted MIL, and help explain the experimental results (Section \ref{sec:experiments}).
To illustrate these theoretical results, reconsider the decryption programs shown in Figure \ref{fig:intro}.
Consider representing these programs in \M{2}{2}.
Figure \ref{fig:m22fo} shows that the first-order program would require seven clauses.
By contrast, Figure \ref{fig:m22ho} shows that the higher-order program requires only three clauses and one extra higher-order variable.
Let $m_u = 4$, $p_u=6$, and $n_u=7$ be the number of metarules, background relations, and clauses needed to express the first-order program shown in Figure \ref{fig:m22fo}.
Plugging these values into the formula in Proposition \ref{prop:hs1} shows that the hypothesis space of unabstracted MIL contains approximately $10^{21}$ programs.
By contrast, suppose we allow an abstracted MIL learner to additionally use the higher-order definition \tw{map/3} and the corresponding \emph{curry} metarule $P(A,B) \leftarrow Q(A,B,R)$.
Therefore $m_a = m_u+1$, $p_a=p_u+1$, $n_a=3$, and $k=1$, where $k$ is the number of additional higher-order variables used in the curry metarule.
Then plugging these values into the formula from Proposition \ref{prop:hs2} shows that the hypothesis space of abstracted MIL contains approximately $10^{13}$ programs, which is substantially smaller than the first-order hypothesis space, despite using more metarules and more background relations.
The Blumer bound \cite{blumer:bound} says that given two hypothesis spaces of different sizes, then searching the smaller space will result in less error compared to the larger space, assuming that the target hypothesis is in both spaces.
In this example, the target hypothesis, or a hypothesis that is equivalent\footnote{Success set equivalent when restricted to the target predicate \tw{decrypt/2}. The success set of a logic program $P$ is the set of ground atoms $\{A \in hb(P)|P\cup\{ \neg A \}\;\text{has a SLD-refutation}\}$, where $hb(P)$ represents the Herband base of the logic program $P$. The success set restricted to a specific predicate symbol $p$ is the subset of the success set restricted to atoms of the predicate symbol $p$.} to the target hypothesis, is in both hypothesis spaces but the abstracted MIL space is smaller.
Therefore, our results imply that in this scenario, given a fixed number of examples, abstracted MIL should improve predictive accuracies compared to unabstracted MIL.
In Section \ref{sec:encryption} we experimentally explore whether this result holds.

\begin{figure}[ht]
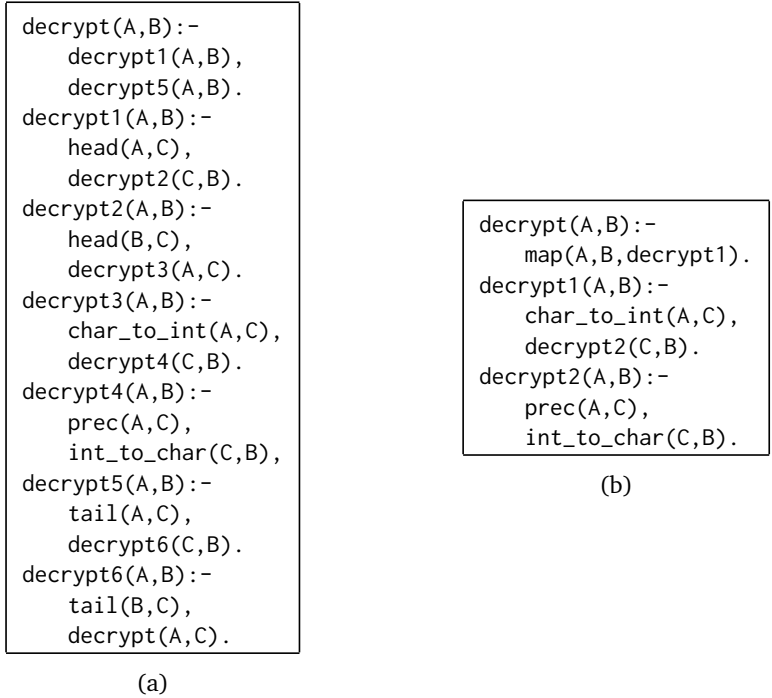

\normalsize
\begin{subfigure}[t]{.5\textwidth}
\centering
\normalsize
\begin{tabular}{|c|}
\hline
\begin{lstlisting}
decrypt(A,B):-
    decrypt1(A,B),
    decrypt5(A,B).
decrypt1(A,B):-
    head(A,C),
    decrypt2(C,B).
decrypt2(A,B):-
    head(B,C),
    decrypt3(A,C).
decrypt3(A,B):-
    char_to_int(A,C),
    decrypt4(C,B).
decrypt4(A,B):-
    prec(A,C),
    int_to_char(C,B),
decrypt5(A,B):-
    tail(A,C),
    decrypt6(C,B).
decrypt6(A,B):-
    tail(B,C),
    decrypt(A,C).
\end{lstlisting} \\
\hline
\end{tabular}
\normalsize
\caption{}
\label{fig:m22fo}
\end{subfigure}%
\begin{subfigure}[t]{.5\textwidth}
\centering
\begin{tabular}{|c|}
\hline
\begin{lstlisting}
decrypt(A,B):-
    map(A,B,decrypt1).
decrypt1(A,B):-
    char_to_int(A,C),
    decrypt2(C,B).
decrypt2(A,B):-
    prec(A,C),
    int_to_char(C,B).
\end{lstlisting} \\
\hline
\end{tabular}
\caption{}
\label{fig:m22ho}
\end{subfigure}
\caption{Decryption programs.
Figure (a) shows a first-order program represented in \M{2}{2}.
Figure (b) shows a higher-order program represented in \M{2}{2} with one extra higher-order variable (the third argument of \tw{map/3}).
}
\end{figure}
\section{Algorithms}
\label{sec:algos}

We now introduce \metaho{} and \hexho{}, both of which implement abstracted MIL and which extend Metagol and \hex{} respectively.
For self-containment, we also describe Metagol and \hex{}.

\subsection{Metagol}
\label{sec:metagol}
Metagol \cite{metagol} is a MIL learner based on a Prolog meta-interpreter.
Figure \ref{fig:metagol} shows Metagol's learning procedure described using Prolog.
Metagol works as follows.
Given a set of atoms representing positive examples, Metagol tries to prove each atom in turn.
Metagol first tries to deductively prove an atom using compiled BK by delegating the proof to Prolog (\tw{call(Atom)}), where the compiled BK contains standard Prolog definitions.
Metagol uses \emph{prim} statements to allow a user to specify what predicates are part of the compiled BK.
Prim statements of the form \tw{prim(P/A)}, where \tw{P} is a predicate symbol and \tw{A} is the associated arity, and are similar to determinations used by Aleph \cite{aleph}, except that Metagol only requires prim statements for predicates that may appear in the body.
If this deductive step fails, Metagol tries to unify the atom with the head of a metarule (\tw{metarule(Name,Subs,(Atom:-Body))}) and tries to bind the existentially quantified higher-order variables in a metarule to symbols in the predicate signature, where \tw{Subs} contains the substitutions.
Metagol saves the resulting substitutions and tries to prove the body of the metarule.
After proving all atoms, a Prolog program is formed by projecting the substitutions onto their corresponding metarules.
Metagol checks the consistency of the learned program with the negative examples.
If the program is inconsistent, then Metagol backtracks to explore different branches of the SLD-tree.

Metagol uses iterative deepening to ensure that the first consistent hypothesis returned has the minimal number of clauses.
The search starts at depth 1.
At depth $d$ the search returns a consistent hypothesis with at most $d$ clauses if one exists; otherwise it continues to depth $d+1$.
At each depth $d$, Metagol introduces $d-1$ new predicate symbols\footnote{
    Metagol forms new predicate symbols by taking the name of the task and adding underscores and numbers.
For example, if the task is \tw{f} and the depth is $4$ then Metagol will add the predicate symbols \tw{f\_3}, \tw{f\_2}, and \tw{f\_1} to the predicate signature.
Note that in this paper we remove the underscore symbols from any learned programs to save space, but the experimental code contains the original underscore symbols.
}.



\begin{figure}[ht]
\centering
\begin{tabular}{|c|}
\hline
\begin{lstlisting}
learn(Pos,Neg,Prog):-
    prove(Pos,[],Prog),
    \+ prove(Neg,Prog,Prog).
prove([],Prog,Prog).
prove([Atom|Atoms],Prog1,Prog2):-
    prove_aux(Atom,Prog1,Prog3),
    prove(Atoms,Prog3,Prog2).
prove_aux(Atom,Prog,Prog):-
    prim(Atom),!,
    call(Atom).
prove_aux(Atom,Prog1,Prog2):-
    member(sub(Name,Subs),Prog1),
    metarule(Name,Subs,(Atom:-Body)),
    prove(Body,Prog1,Prog2).
prove_aux(Atom,Prog1,Prog2):-
    metarule(Name,Subs,(Atom:-Body)),
    prove(Body,[sub(Name,Subs)|Prog1],Prog2).
\end{lstlisting} \\
\hline
\end{tabular}
\caption{
Metagol's learning procedure described using Prolog.
Note that this code is the barebones code for Metagol and the actual code differs.
The actual code has slightly different syntax and includes more code, such as code to perform the iterative deepening and code to invent new predicate symbols.
For instance, in this Figure \tw{prim(Atom)} is not of the form \tw{prim(P/A)}, as described in the text.
}
\label{fig:metagol}
\end{figure}

\subsection{\metaho{}}

Figure \ref{fig:metagolho} shows the Prolog code for \metaho{}.
The key difference between \metaho{} and Metagol is the introduction of the second \tw{prove\_aux/3} clause in the meta-interpreter, denoted in boldface.
This clause allows \metaho{} to prove an atom by fetching a clause from the IBK (such as \tw{map/3}) whose head unifies with a given atom.
The distinction between compiled and interpreted BK is that whereas a clause from the compiled BK is proved deductively by calling Prolog, a clause from the IBK is proved through meta-interpretation.
Meta-interpretation allows for predicate invention to be driven by the proof of conditions (as in \tw{filter/3}) and functions (as in \tw{map/3}).
IBK is different to metarules because the clauses are all universally quantified and, importantly, does not require any substitutions.
By contrast, metarules contain existentially quantified variables whose substitutions form the hypothesised program.
Figure \ref{fig:bk} shows examples of the three forms of BK used by \metaho{}.

\begin{figure}[ht]
\centering
\begin{tabular}{|c|}
\hline
\begin{lstlisting}
learn(Pos,Neg,Prog):-
    prove(Pos,[],Prog),
    \+ prove(Neg,Prog,Prog).
prove([],Prog,Prog).
prove([Atom|Atoms],Prog1,Prog2):-
    prove_aux(Atom,Prog1,Prog3),
    prove(Atoms,Prog3,Prog2).
prove_aux(Atom,Prog,Prog):-
    prim(Atom),!,
    call(Atom).
(*\bfseries prove\_aux(Atom,Prog1,Prog2):-*)
    (*\bfseries ibk((Atom:-Body)),*)
    (*\bfseries prove(Body,Prog1,Prog2).*)
prove_aux(Atom,Prog1,Prog2):-
    member(sub(Name,Subs),Prog1),
    metarule(Name,Subs,(Atom:-Body)),
    prove(Body,Prog1,Prog2).
prove_aux(Atom,Prog1,Prog2):-
    metarule(Name,Subs,(Atom:-Body)),
    prove(Body,[sub(Name,Subs)|Prog1],Prog2).
\end{lstlisting} \\
\hline
\end{tabular}
\caption{Prolog code for \metaho{}.
}
\label{fig:metagolho}
\end{figure}

\begin{figure}[ht]
\centering

\begin{minipage}{.35\textwidth}
\centering
\begin{lstlisting}[frame=single,title={Compiled BK}]
empty([]).
head([H|_],H).
tail([_|T],T).
last([A],A):-!.
last([_|T],A):-last(T,A).
\end{lstlisting}
\end{minipage}

\begin{minipage}{.95\textwidth}
\centering
\begin{lstlisting}[frame=single,title={Interpreted BK}]
ibk(([map,[],[],F]:-[])).
ibk(([map,[A|As],[B|Bs],F]:-[[F,A,B],[map,As,Bs,F]])).
ibk(([fold,[],Acc,Acc,F]:-[])).
ibk(([fold,[A|As],Acc1,B,F]:-[[F,A,Acc1,Acc2],[fold,As,Acc2,B,F]])).
ibk([ifthenelse,A,B,Cond,Then,_],[[Cond,A],[Then,A,B]]).
ibk([ifthenelse,A,B,Cond,_,Else],[[not,Cond,A],[Else,A,B]]).
\end{lstlisting}
\end{minipage}

\begin{minipage}{.76\textwidth}
\centering
\begin{lstlisting}[frame=single,title={Metarules},label=lst:metarules]
metarule(monadic,[P,Q],([P,A,A]:-[[Q,A]])).
metarule(identity,[P,Q],([P,A,B]:-[[Q,A,B]])).
metarule(inverse,[P,Q],([P,A,B]:-[[Q,B,A]])).
metarule(didentity,[P,Q],([P,A,B]:-[[Q,A,B],[R,A,B]])).
metarule(precon,[P,Q,R],([P,A,B]:-[[Q,A],[R,A,B]])).
metarule(postcon,[P,Q,R],([P,A,B]:-[[Q,A,B],[R,B]])).
metarule(curry1,[P,Q,R],([P,A,B]:-[[Q,A,B,R]])).
metarule(curry2,[P,Q,R,S],([P,A,B]:-[[Q,A,B,R,S]])).
metarule(curry3,[P,Q,R,S,T],([P,A,B]:-[[Q,A,B,R,S,T]])).
metarule(chain,[P,Q,R],([P,A,B]:-[[Q,A,C],[R,C,B]])).
metarule(tailrec,[P,Q],([P,A,B]:-[[Q,A,C],[P,C,B]])).
\end{lstlisting}
\end{minipage}

\caption{
Three forms of BK used by \metaho{} described in Prolog syntax.
The curry rules are slightly unusual but are necessary to use the interpreted BK (e.g. \emph{curry1} allows us to use the \tw{map/3} definition).}
\label{fig:bk}
\end{figure}

\metaho{} works in the same way as Metagol except for the use of IBK.
\metaho{} first tries to prove an atom deductively using compiled BK by delegating the proof to Prolog (\tw{call(Atom)}), exactly how Metagol works.
If this step fails, \metaho{} tries to unify the atom with the head of a clause in the IBK (\tw{ibk((Atom:-Body))}) and tries to prove the body of the matched definition.
Metagol does not perform this additional step.
Failing this, \metaho{} continues to work in the same way as Metagol.
\metaho{} uses negation as failure \cite{Clark87} to negate predicates in the compiled BK.
Negation of invented predicates is unsupported and is left for future work\footnote{Metagol could support the negation of invented predicates but it is non-trivial to efficiently negate an invented predicate that itself contains an invented predicate. This limitation could be addressed by allowing Metagol to alternatively perform a generate and test approach. However, a generate-and-test approach is impractical for non-trivial situations.}.

To illustrate the difference between Metagol and \metaho{}, suppose you have compiled BK containing the \tw{succ/2}, \tw{int\_to\_char/2}, and \tw{map/3} predicates and the \emph{curry1} ($P(A,B) \leftarrow Q(A,B,R)$) and \emph{chain} ($P(A,B) \leftarrow Q(A,C), R(C,B)$) metarules.
Suppose you are given the examples \tw{f([1,2,3],[c,d,e])} and \tw{f([1,2,1],[c,d,c])} where the underlying target hypothesis is to add two to each element of the list and find the corresponding letter in an a-z index.
Given these examples Metagol would try to prove each atom in turn.
Metagol cannot prove any example using only the compiled BK so it would need to use a metarule.
Suppose it unifies the atom \tw{f([1,2,3],[c,d,e])} with the \emph{curry} metarule.
Then the new atom to prove would be \tw{Q([1,2,3],[c,d,e],R)}.
To prove this atom Metagol could unify \tw{map/3} with \tw{Q} and then try to prove the atom \tw{map([1,2,3],[c,d,e],R)}.
However, the proof of \tw{map([1,2,3],[c,d,e],R)} would fail because there is no suitable substitution for \tw{R}.
The only possible substitution for $R$ is \tw{succ/2}, which will clearly not allow the proof to succeed.
The only way Metagol can learn a consistent hypothesis is by successively chaining calls to \tw{map(A,B,succ)} and \tw{map(A,B,int\_to\_char)} using the \emph{chain} metarule to learn:

\begin{center}
\begin{minipage}{.40\textwidth}
\centering
\begin{lstlisting}[frame=single]
f(A,B):-f1(A,C),f3(C,B)
f1(A,B):-f2(A,C),f2(C,B).
f2(A,B):-map(A,B,succ).
f3(A,B):-map(A,B,int_to_char).
\end{lstlisting}
\end{minipage}
\end{center}

\noindent
By contrast, suppose we had the same setup for \metaho{} but we allowed \tw{map/3} to be defined as IBK.
In this case, \metaho{} would unify the atom \tw{f([1,2,3],[c,d,e])} with the \emph{curry1} metarule.
The new atom to prove would therefore be \tw{Q([1,2,3],[c,d,e],R)}.
In contrast to Metagol, \metaho{} can unify this atom with \tw{map/3} defined as IBK.
\metaho{} will then try to prove \tw{map([1,2,3],[c,d,e],R)} through meta-interpretation.
This step would result in a sequence of new atoms to prove \tw{R(1,c), R(2,d), R(3,e)}.
These new atoms can also be proven though meta-interpretation which allows for \metaho{} to invent and define the suitable symbol for \tw{R}.
Therefore, in this scenario, Metagol would learn:

\begin{center}
\begin{minipage}{.5\textwidth}
\centering
\begin{lstlisting}[frame=single]
f(A,B):-map(A,B,f1).
f1(A,B):-succ(A,C),f2(C,B).
f2(A,B):-succ(A,C),int_to_char(C,B).
\end{lstlisting}
\end{minipage}
\end{center}

\noindent
As this scenario illustrates, the real power and novelty of \metaho{} is the combination of abstraction (learning higher-order programs) and invention (predicate invention).
In this scenario, abstraction has allowed the atom \tw{Q([1,2,3],[c,d,e],R)} to be decomposed into the sub-problems \tw{R(1,c), R(2,d), R(3,e)}.
Further abstraction and invention allows for \metaho{} to solve these sub-problems by inventing and defining the necessary predicate for \tw{R}.
By successively interleaving these two steps, \metaho{} supports the invention of conditions and functions to an arbitrary depth, which goes beyond anything in the literature.

\subsection{\hex{}}

Before describing \hexho{}, which supports learning higher-order logic programs, first we discuss \hex{}, on which \hexho is based.

\hex{} is an answer set programming (ASP) encoding of MIL introduced by Kaminski et al. \cite{hexmil}.
Whereas Metagol searches for a proof (and thus a program) using a meta-interpreter and SLD-resolution, HEXMIL searches for a proof by encoding the MIL problem as an ASP problem.
As argued by Kaminski et al., an ASP implementation can be more efficient than a Prolog implementation because ASP solvers employ efficient conflict propagation, which is important for detecting the derivability of negative examples early during ASP search.

The \hex{} encoding specifies constraints on possible hypotheses derived from the examples, in addition to rules specifying the available BK.
An ASP solver performs a combinatorial search for solutions satisfying these constraints.
ASP solvers typically work in two phases: (1) a grounding phase, where rules are grounded, and (2) a solving phase, where reasoning on (propositional) rules leads to answer sets \cite{GelfondL91}.
A straightforward ASP encoding of the MIL problem is infeasible in many cases, for reasons such as the grounding bottleneck of ASP and the difficulty in manipulating complex structures such as lists \cite{hexmil}.
To mitigate these difficulties HEXMIL uses the HEX formalism \cite{eiter:hex} which allows ASP programs to interface with \emph{external sources}.
External sources are predicate definitions given by programs outside of the ASP language.
For instance, HEXMIL interfaces with external sources described as a Python program.
HEX programs can access these definitions via \emph{external atoms}.
HEXMIL benefits from external atoms by allowing for arbitrary encodings of complex structures (e.g.~we encode lists as strings, thereby reducing the number of variables needed in the encoding).
Another benefit is that external atoms allow for the incremental introduction of new constants (i.e. symbols not in the initial ASP program).

To improve efficiency, Kaminski et al.~introduced a \emph{forward-chained} HEXMIL-encoding which requires forward-chained metarules:

\begin{definition}[\textbf{Forward-chained metarule}]
\label{def:fwdchain}
A metarule is \emph{forward-chained} when it can be written in the form:
\[
P(A,B) \leftarrow Q_1(A,C_1),Q_2(C_1,C_2),\ldots,Q_i(C_{i-1},B),R_1(D_1),\ldots,R_j(D_j)
\]
where $D_1,\ldots,D_j$ are all contained in $\{A,C_1,\ldots,C_{i-1},B\}$.
\end{definition}

\noindent
In the forward-chained HEXMIL encoding, compiled (first-order) BK is encoded using the external atoms \tw{\&bkUnary[P,A]()} and \tw{\&bkBinary[P,A](B)}.
These two atoms represent all BK predicates of the form \tw{P(A)} and \tw{P(A,B)}, where \tw{P} and \tw{A} are input arguments to the external source and \tw{B} is an output argument.
Using the input/output ordering of the external binary atoms, grounding of variables in forward-chained metarules occurs from left to right.
HEXMIL uses the forward-chained encoding:

\begin{center}
\begin{tabular}{l}
\emph{deduced(P,A) $\leftarrow$ \&bkUnary[P,A](), state(A)}\\
\emph{deduced(P,A,B) $\leftarrow$ \&bkBinary[P,A](B), state(A)}\\
\emph{state(A) $\leftarrow$ for each P(A,B) $\in E^+ \cup E^-$}\\
\emph{state(B) $\leftarrow$ deduced(P,A,B)}\\
\end{tabular}
\end{center}

\noindent
HEXMIL uses the \emph{deduced} predicate to represent facts that hypotheses could entail.
In this encoding, the import of BK is guarded by the predicate \emph{state/1}.
A solution for MIL problem (Definition \ref{def:milproblem}) must entail all positive examples (i.e ground atoms).
Therefore, in HEXMIL, every positive examples must appear in the head of a grounded metarule.
It follows that ground terms in atoms can be seen as the states that can be reached from the examples.
Therefore, HEXMIL initially marks the ground terms that appear in the examples as \emph{state}.
As new ground terms are introduced by the external atoms, HEXMIL marks these values as \emph{state} as well.

To support metarules HEXMIL employs two encoding rules.
The first rule encodes the possible instantiations of a metarule.
Let $mr$ be the name of an arbitrary forward-chained metarule (Def.~\ref{def:fwdchain}), then for each such metarule, the first encoding rule is:

\begin{center}
\begin{tabular}{l}
$meta(mr,P,Q_1,\ldots,Q_i,R_1,\ldots,R_j)~\vee~neg\_meta(mr,P,Q_1,\ldots,Q_i,R_1,\ldots,R_j) \leftarrow $\\
$\hspace{2.3em} sig(P),sig(Q_1),\ldots,sig(Q_i),sig(R_1),\ldots,sig(R_j),$\\
$\hspace{2.3em} ord(P,Q_1),\ldots,ord(P,Q_i),ord(P,R_1),\ldots,ord(P,R_j),$\\
$\hspace{2.3em} deduced(Q_1,A,C_1),\ldots,deduced(Q_i,C_{i-1},B),$\\
$\hspace{2.3em} deduced(R_1,D_1),\ldots,deduced(R_j,D_j)$
\end{tabular}
\end{center}

\noindent
Note that the head in this rule allows for choosing whether to deduce the metarule instantiation.
Also note that the disjunction in the head means that this is not a Horn clause, yet it encodes a Horn clause metarule.
This encoding rule relies on two other rules:
\begin{center}
\begin{tabular}{l}
\emph{sig(p) $\leftarrow$ for each p $\in \mathcal{P}$} \\
\emph{ord(p,q) $\leftarrow$ for all p,q $\in \mathcal{P}$ s.t. p $\preceq$ q}\\
\end{tabular}
\end{center}
\noindent
The $sig$ relation denotes predicate symbols available, both invented and given as part of the BK.
The $ord$ relation denotes an ordering $\preceq$ over the predicate symbols.
This ordering disallows certain instantiations\footnote{Details on the $\preceq$-relation can be found in the paper on HEXMIL \cite{hexmil} as well as in the files on our experimental work.}, e.g.~recursive instantiations.


The second metarule encoding allows for metarule instantiations to be generated in order to derive facts:
\begin{center}
\begin{tabular}{l}
$deduced(P,A,B) \leftarrow$\\
$\hspace{2.1em} meta(mr,P,Q_1,\ldots,Q_i,R_1,\ldots,R_j),$\\
$\hspace{2.1em} deduced(Q_1,A,C_1),\ldots,deduced(Q_i,C_{i-1},B),$\\
$\hspace{2.1em} deduced(R_1,D_1),\ldots,deduced(R_j,D_j)$
\end{tabular}
\end{center}

\noindent
The generation of metarule instantiations are then checked by the solver for consistency with the examples.
This checking step relies on constraints derived from positive and negative examples:

\begin{center}
\begin{tabular}{l}
$\leftarrow not \; deduce(P,A,B) \hspace{2mm} \textnormal{for each~} P(A,B) \in E^+$\\
$\leftarrow deduce(P,A,B) \hspace{2mm} \textnormal{for each~} P(A,B) \in E^-$
\end{tabular}
\end{center}

\noindent
Similar to Metagol, HEXMIL searches for solutions using iterative deepening on the number of allowed metarule instantiations and the number of predicate symbols.
We omit the details of the ASP constraints that restrict the number of metarule instantiations.

\subsection{\hexho{}}

We now describe the extension of \hex{} to \hexho{}, which adds support for higher-order definitions, i.e.~interpreted background knowledge (IBK).
This extension allows HEXMIL to search for programs in \emph{abstracted} forward-chained hypothesis spaces.
To extend HEXMIL, we introduce a new predicate $ibk$ to encode the higher-order atoms that occur in IBK.
Note that $ibk$ is a normal ASP predicate and not an external atom.
This predicate allows us to encode higher-order clauses as a mix of $deduced$ atoms for first-order predicates and $ibk$ atoms for those that involve predicates as arguments.

Let the following be a clause of an arbitrary (forward-chained) higher-order definition:
\begin{center}
\begin{tabular}{l}
$h(A,B,P_{0,1},\ldots,P_{0,k_0}) \leftarrow h_1(A,C_1,P_{1,1},\ldots,P_{1,k_1}),\ldots,h_j(C_{j-1},B,P_{j,1},\ldots,P_{j,k_j})$\\
\end{tabular}
\end{center}
Every atom in this clause can have $0 \leq k_i$ higher-order terms.
The higher-order clauses of the definition will have at least one atom with $k_i \neq 0$.
For each clause in a higher-order definition we give a rule encoding the clause,
where $C_0 = A$ and $C_j = B$:

\begin{center}
\begin{tabular}{ll}
$ibk(h,A,B,P_{0,1},\ldots,P_{0,k_0}) \leftarrow$\\
$\hspace{2.35em} state(A),$\\
$\hspace{2.35em} sig(P_{0,1}),\ldots,sig(P_{0,k_0}),$\\
$\hspace{2.35em} ibk(h_i,C_{i-1},C_i,P_{i,1},\ldots,P_{i,k_i}),sig(P_{i,1}),\ldots,sig(P_{i,k_i})$&if $k_i > 0$\\
$\hspace{2.35em} deduced(h_i,C_{i-1},C_i)$&if $k_i = 0$\\
\end{tabular}
\end{center}
\noindent
Figure \ref{fig:hexmil-ex} shows an example of this encoding for the \tw{until/4} predicate.
Figure \ref{fig:hexmil-ex} also contains a definition for \tw{map/3} (which is slightly more involved).
This approach to higher-order definitions also applies to metarules involving higher-order atoms.
For instance, Figure \ref{fig:hexmil-ex} also shows the encoding of the \emph{curry2} metarule.

Our extension is sufficient\footnote{Kaminski, et al.~\cite{hexmil} proposed an additional first-order \emph{state-abstraction} encoding that improved the efficiency of the learning. It is currently unclear as how to integrate IBK into this encoding.}
to learn higher-order programs.
Note that in this setting higher-order definitions are required to be forward-chained in their first-order arguments, meaning that left-to-right grounding of these arguments is still valid.
The remaining (higher-order) arguments can be ground by the $sig$ predicate, which contains all the predicate names.
As predicate symbols were already arguments in the \hex{} encoding, we can easily make a predicate argument occur as an atom's predicate symbol, e.g.~see the variable \tw{F} in \tw{until/4} and \tw{map/3} in Figure \ref{fig:hexmil-ex}.


\begin{figure}[ht]
\centering
\begin{minipage}{.70\textwidth}
\centering
\begin{lstlisting}[mathescape=true,frame=single,title={\emph{curry2} metarule},label=lst:aspcurry2]
meta(curry2,P,Q,R,S)$~\vee~$neg_meta(curry2,P,Q,R,S):-
    sig(P),sig(Q),sig(R),sig(S),
    ord(P,Q),ord(P,R),ord(P,S),
    ibk(Q,A,B,R,S).
deduced(P,A,B):-
    meta(curry2,P,Q,R,S),
    ibk(Q,A,B,R,S).
\end{lstlisting}
\end{minipage}


\begin{minipage}{.375\textwidth}
\centering
\begin{lstlisting}[mathescape=true,frame=single,title={\tw{until/4}},label=lst:aspuntil]
ibk(until,A,A,Cond,F):-
    state(A),
    sig(Cond),
    sig(F),
    deduced(Cond,A).
ibk(until,A,B,Cond,F):-
    state(A),
    sig(Cond),
    sig(F),
    not deduced(Cond,A),
    deduced(F,A,C),
    ibk(until,C,B,Cond,F).
\end{lstlisting}
\end{minipage}
\hspace{4em}
\begin{minipage}{.35\textwidth}
\centering
\begin{lstlisting}[mathescape=true,frame=single,title={\tw{map/3}},label=lst:aspmap]
ibk(map,"[]","[]",F):-
    sig(F).
ibk(map,L1,L2,F):-
    state(L1),
    sig(F),
    deduced(head,L1,H1),
    deduced(tail,L1,T1),
    deduced(F,H1,H2),
    ibk(map,T1,T2,F),
    &prepend[H2,T2](L2).
\end{lstlisting}
\end{minipage}
\caption{\hexho{} code examples. The "[]" symbol in the \tw{map/3} definition is special syntax we use to represent lists. Note that due to lists being encoded as strings, the \tw{prepend} external atom is required to manipulate the lists in the \tw{map/3} definition.}
\label{fig:hexmil-ex}
\end{figure}


\subsection{Complexity of the search}
\label{sec:complex}


The experiments in the next section use both \metagol{} and \hex{}, and their higher-order extensions.
The purpose of the experiments is to test our claim that learning higher-order programs, rather than first-order programs, can improve learning performance.
Although we do not directly compare them, the experimental results show a significant difference in the learning performances of \metagol{} and \hex{}, and their higher-order variants.
The experimental results also show that \hex{} and \hexho{} do not scale well, both in terms of the amount of BK and the number of training examples.
To help explain these results, we now contrast the theoretical complexity of \metagol{} and \hex{}.
For simplicity we focus on the $\mathcal{M}_2^2$ hypothesis space, although our results can easily be generalised.
Our main observation is that the performance of \hex{} is a function of the number of constant symbols, which is not the case for \metagol{}.

From Proposition \ref{prop:hs1} it follows that the $\mathcal{M}_2^2$ MIL hypothesis space contains at most $(mp^3)^n$ programs.
For \metagol{}, this bound is an over-approximation on the number of programs that will be considered during the search.
Given a training example, \metagol{} learns a program by trying different substitutions for the existentially quantified predicate symbols in metarules, where the search is driven by the example.
Metagol only considers constants that it encounters when it evaluates whether a hypothesis covers an example, in which case it only considers the constant symbols pertaining to that particular example (in fact it delegates this step to Prolog).
It follows that the search complexity of \metagol{} is independent of the number of constant symbols and is the same\footnote{Metagol is sensitive to the size of the examples and to the computational complexity of a hypothesis because, as Schapire showed \cite{schapire:1990}, if checking whether a hypothesis $H$ covers an example $e$ cannot be performed in time polynomial in the size of $e$ and $H$ then $H$ cannot be learned in time polynomial in the size of $e$ and $H$, i.e. Metagol needs to execute a learned program on the example.} as Proposition \ref{prop:hs1}.


By contrast, \hex{} searches for a program by instantiating metarules in a bottom-up manner where the body atoms of metarules need to be grounded.
This approach means that the number of options that \hex{} considers is not only a function of the number of metarules and predicate symbols (as is the case for Metagol), but it is also a function of the number of constant symbols because it needs to ground the first-order variables in a metarule.
Even in the more efficient forward-chained MIL encoding, which incrementally imports new constants, body atoms are ground using many constant symbols unrelated to the examples.
Any constant that can be marked as a state will be used to ground atoms.
Therefore, the search complexity of \hex{} is bounded by $(mp^3c^6)^n$, where $m$ is the number of metarules, $p$ is the number of predicate symbols, $n$ is a maximum program size, and $c$ is the number of constant symbols.

For simplicity, the above complexity reasoning was for the first-order systems.
We can easily apply the same reasoning to the abstracted MIL setting.





\section{Experiments}
\label{sec:experiments}

Our main claim is that compared to learning first-order programs, learning higher-order programs can improve learning performance.
Theorem \ref{thm:ratio} supports this claim and shows that, compared to unabstracted MIL, abstraction in MIL reduces sample complexity proportional to the reduction in the number of clauses required to represent hypotheses.
We now experimentally\footnote{
All the experimental code and materials, including the code for Metagol, HEXMIL, and their higher-order extensions, is available at https://github.com/andrewcropper/mlj19-metaho
} explore this result.
We describe four experiments which compare the performance when learning first-order and higher-order programs.
We test the null hypotheses:

\begin{description}
\item [\textbf{Null hypothesis 1}] Learning higher-order programs cannot improve predictive accuracies
\item [\textbf{Null hypothesis 2}] Learning higher-order programs cannot reduce learning times
\end{description}

\noindent
To test these hypotheses we compare Metagol with \namea{} and HEXMIL with \nameb{}, i.e. we compare unabstracted MIL with abstracted MIL.

\subsection{Common materials}
In the Prolog experiments we use the same metarules and IBK in each experiment, i.e. the only variable in the Prolog experiments is the system (Metagol or \namea{}).
We use the metarules shown in Figure \ref{fig:bk}.
We use the higher-order definitions \tw{map/3}, \tw{until/4}, and \tw{ifthenelse/5} as IBK.
We run the Prolog experiments using SWI-Prolog 7.6.4 \cite{swipl}.

We tried to use the same experimental methodology in the ASP HEXMIL experiments as in the Prolog experiments but HEXMIL failed to learn any programs (first or higher-order) because of scalability issues.
Therefore, in each ASP experiment we use the exact metarules and background relations necessary to represent the target hypotheses.
We run the ASP experiments using Hexlite 1.0.0\footnote{https://github.com/hexhex/hexlite}.
We run Hexlite with the \emph{flpcheck} disabled.
We also set Hexlite to enumerate a single model.

\subsection{Robot waiter}

Imagine teaching a robot to pour tea and coffee at a dinner table, where each setting has an indication of whether the guest prefers tea or coffee.
Figure \ref{fig:waiter} shows an example in terms of initial and final states.
This experiment focuses on learning a general robot waiter strategy \cite{crop:metagolo} from a set of examples.

\begin{figure}[ht]
\centering
\begin{tabular} {c|c}
\includegraphics[scale=0.5]{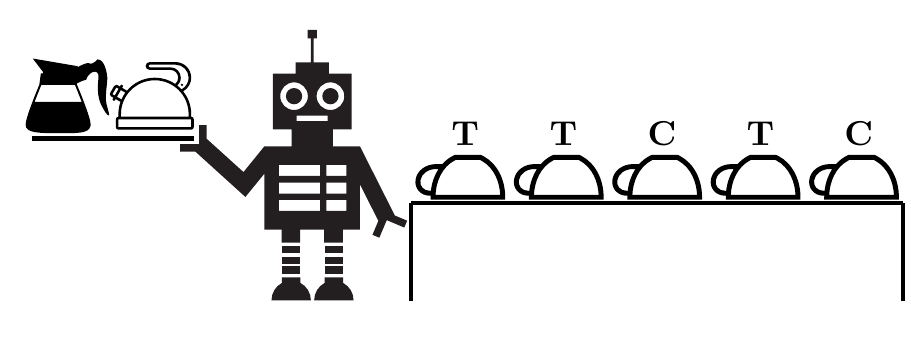} &
\includegraphics[scale=0.5]{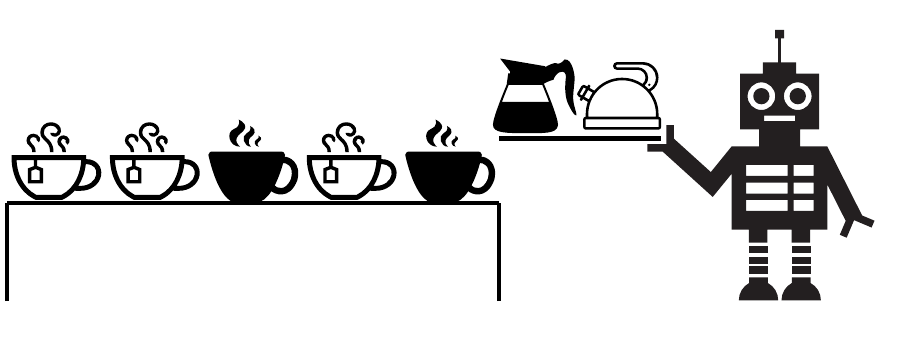} \\
(a) Initial state & (b) Final state
\end{tabular}
\caption{Figures (a) and (b) show initial/final state waiter examples respectively.
In the initial state, the cups are empty and each guest has a preference for tea \textbf{(T)} or coffee \textbf{(C)}.
In the final state, the cups are facing up and are full with the guest's preferred drink. }
\label{fig:waiter}
\end{figure}

\subsubsection{Materials}
Examples are \tw{f/2} atoms where the first argument is the initial state and the second is the final state.
A state is a list of ground Prolog atoms.
In the initial state, the robot starts at position 0, there are $d$ cups facing down at positions $0,\dots,d-1$; and for each cup there is a preference for tea or coffee.
In the final state, the robot is at position $d$; all the cups are facing up; and each cup is filled with the preferred drink.
We allow the robot to perform the fluents and actions (defined as compiled BK) shown in Figure \ref{fig:waiter-bk}.

\begin{figure}[ht]
\centering
\normalsize
\begin{tabular}{|ll|}
\hline
\tw{at\_end/1}& \tw{wants\_tea/1}\\
\tw{wants\_coffee/1}& \tw{move\_left/2}\\
\tw{move\_right/2}& \tw{turn\_cup\_over/2}\\
\tw{pour\_tea/2}& \tw{pour\_coffee/2}\\
\hline
\end{tabular}
\caption{
Compiled BK in the robot waiter experiment.
We omit the definitions for brevity.
}
\label{fig:waiter-bk}
\end{figure}

\noindent
We generate positive examples as follows.
For the Prolog experiments, for the initial state we select a random integer $d$ from the interval $[1,20]$ as the number of cups.
For the ASP experiments the interval is $[1,5]$.
For each cup, we randomly select whether the preferred drink is tea or coffee and set it facing down.
For the final state, we update the initial state so that each cup is facing up and is filled with the preferred drink.
To generate negative examples, we repeat the aforementioned procedure but we modify the final state so that the drink choice is incorrect for a random subset of $k>0$ drinks.

\subsubsection{Method}
Our experimental method is as follows.
For each learning system $s$ and for each $m$ in $\{1,2,\dots,10\}$:
\begin{enumerate}
    \item Generate $m$ positive and $m$ negative training examples
    \item Generate 1000 positive and 1000 negative testing example
    \item Use $s$ to learn a program $p$ using the training examples
    \item Measure the predictive accuracy of $p$ using the testing examples
\end{enumerate}

\noindent
If no program is found in 10 minutes then we deem that every testing example is false.
We measure mean predictive accuracies, mean learning times, and standard errors of the mean over 10 repetitions.

\subsubsection{Results}
Figure \ref{fig:prolog-waiter-res} shows that in all cases \metaho{} learns programs with higher predictive accuracies and lower learning times than Metagol.
Figure \ref{fig:asp-waiter-res} shows similar results when comparing \hex{} with \hexho{}.
We can explain these results by looking at example programs learned by Metagol and \metaho{} shown in Figures \ref{fig:waiter-res-fo} and \ref{fig:waiter-res-ho} respectively.
Although both programs are general and handle any number of guests and any assignment of drink preferences, the program learned by \metaho{} is smaller than the one learned by Metagol.
Whereas Metagol learns a recursive program, \metaho{} avoids recursion and uses the higher-order abstraction \tw{until/4}.
The abstraction \tw{until/4} essentially removes the need to learn a recursive two clause definition to move along the dinner table.
Likewise, \metaho{} uses the abstraction \tw{ifthenelse/5} to remove the need to learn two clauses to decide which drink to pour.
The compactness of the higher-order program affects predictive accuracies because, whereas \metaho{} almost always finds the target hypothesis in the allocated time, Metagol often struggles because the programs are too large, as explained by our theoretical results in Section \ref{sec:sc}.
The results from this experiment suggest that we can reject null hypotheses 1 and 2.

Although we are not directly comparing the Prolog and ASP implementations of MIL, it is interesting to note that despite having more irrelevant BK, more irrelevant metarules, and having larger training instances, \metaho{} outperforms \hexho{} in all cases, both in terms of predictive accuracies and learning times.
Figure \ref{fig:asp-waiter-res} also shows that both \hex{} and \hexho{} do not scale well in the number of training examples, especially the learning times.
Our results in Section \ref{sec:complex} help explain the poor scalability of \hex{} and \hexho{} because more training examples typically means more constant symbols which in turn means a larger search complexity for both \hex{} and \hexho{}, although this issue can be mitigated using state abstraction \cite{hexmil}.

\begin{figure}[ht]
\centering
\begin{tabular} {cc}
\begin{tikzpicture}[scale=0.70]
    \begin{axis}[
    ymin=0,
    ymax=100,
    xmin=2,
    xmax=20,
    xtick={0,2,4,...,20},
    ylabel absolute,
    ylabel style={yshift=-6mm},
    xlabel=No. training examples,
    ylabel=Accuracy (\%),
    legend style={legend pos=south east,font=\small},
    every axis legend/.append style={nodes={right}}]

\addplot+[gray,mark=square*,mark options={color=gray},error bars/.cd,y dir=both,y explicit] coordinates {
(2,61) +- (0,4)
(4,60) +- (0,5)
(6,79) +- (0,5)
(8,89) +- (0,4)
(10,88) +- (0,4)
(12,75) +- (0,8)
(14,88) +- (0,6)
(16,80) +- (0,8)
(18,88) +- (0,6)
(20,78) +- (0,7)
    };

\addplot+[blue,mark=*,mark options={fill=blue},error bars/.cd,y dir=both,y explicit] coordinates {
(2,87) +- (0,4)
(4,95) +- (0,1)
(6,95) +- (0,1)
(8,94) +- (0,1)
(10,92) +- (0,1)
(12,100) +- (0,0)
(14,98) +- (0,1)
(16,100) +- (0,0)
(18,98) +- (0,1)
(20,98) +- (0,1)
    };

\addplot+[mark=none,dotted] coordinates {
    (2,50)
    (20,50)
};

    \legend{Metagol,\metaho{},default}
    \end{axis}
  \end{tikzpicture} &
\begin{tikzpicture}[scale=0.70]
    \begin{axis}[
    ymin=0,
    xmin=2,
    xmax=20,
    xtick={0,2,4,...,20},
    xlabel=No. training examples,
    ylabel=Seconds,
    ylabel absolute,
    ylabel style={yshift=-4mm},
    legend style={legend pos=north west,font=\small},
    every axis legend/.append style={nodes={right}}]

\addplot+[gray,mark=square*,mark options={color=gray},error bars/.cd,y dir=both,y explicit] coordinates {
(2,85) +- (0,31)
(4,288) +- (0,90)
(6,239) +- (0,86)
(8,143) +- (0,70)
(10,111) +- (0,57)
(12,316) +- (0,94)
(14,165) +- (0,74)
(16,344) +- (0,75)
(18,267) +- (0,69)
(20,268) +- (0,90)
    };

\addplot+[blue,mark=*,mark options={fill=blue},error bars/.cd,y dir=both,y explicit] coordinates {
(2,14) +- (0,4)
(4,33) +- (0,11)
(6,16) +- (0,5)
(8,10) +- (0,4)
(10,22) +- (0,18)
(12,24) +- (0,9)
(14,6) +- (0,2)
(16,16) +- (0,5)
(18,16) +- (0,6)
(20,15) +- (0,6)
    };

    \legend{Metagol,\metaho{}}
    \end{axis}
  \end{tikzpicture} \\
(a) Predictive accuracies & (b) Learning times
\end{tabular}
\caption{Prolog robot waiter experiment results which show learning performance when varying the number of training examples. }
\label{fig:prolog-waiter-res}
\end{figure}
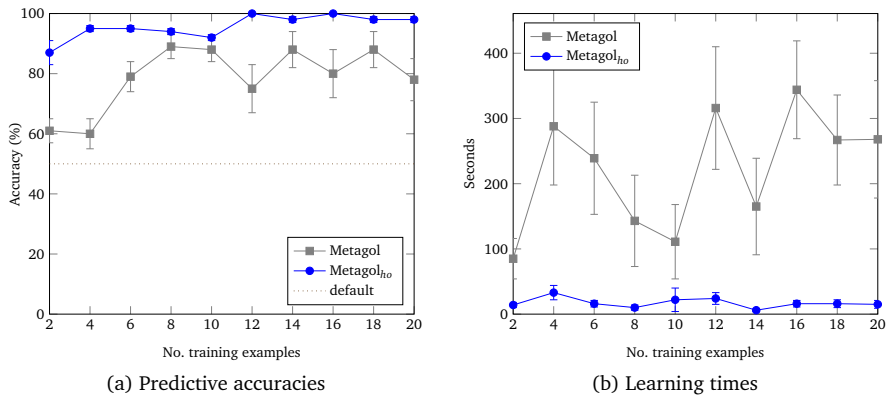

\begin{figure}[ht]
\centering
\begin{tabular} {cc}
\begin{tikzpicture}[scale=0.70]
    \begin{axis}[
    ymin=0,
    ymax=100,
    xmin=2,
    xmax=20,
    xtick={0,2,4,...,20},
    ylabel absolute,
    ylabel style={yshift=-6mm},
    xlabel=No. training examples,
    ylabel=Accuracy (\%),
    legend style={legend pos=south east,font=\small},
    every axis legend/.append style={nodes={right}}]

\addplot+[gray,mark=square*,mark options={color=gray},error bars/.cd,y dir=both,y explicit] coordinates {
(2,49) +- (0,0)
(4,51) +- (0,1)
(6,50) +- (0,0)
(8,50) +- (0,0)
(10,50) +- (0,0)
(12,50) +- (0,0)
(14,50) +- (0,0)
(16,50) +- (0,0)
(18,50) +- (0,0)
(20,50) +- (0,0)
    };

\addplot+[blue,mark=*,mark options={fill=blue},error bars/.cd,y dir=both,y explicit] coordinates {
(2,68) +- (0,4)
(4,77) +- (0,4)
(6,85) +- (0,1)
(8,90) +- (0,2)
(10,87) +- (0,2)
(12,90) +- (0,4)
(14,90) +- (0,4)
(16,83) +- (0,6)
(18,87) +- (0,6)
(20,87) +- (0,6)

    };

\addplot+[mark=none,dotted] coordinates {
    (2,50)
    (20,50)
};

    \legend{\hex{},\hexho{},default}
    \end{axis}
  \end{tikzpicture} &
\begin{tikzpicture}[scale=0.70]
    \begin{axis}[
    ymin=0,
    xmin=2,
    xmax=20,
    xtick={0,2,4,...,20},
    xlabel=No. training examples,
    ylabel=Seconds,
    ylabel absolute,
    ylabel style={yshift=-4mm},
    legend style={legend pos=south east,font=\small},
    every axis legend/.append style={nodes={right}}]

\addplot+[gray,mark=square*,mark options={color=gray},error bars/.cd,y dir=both,y explicit] coordinates {
(2,303) +- (0,98)
(4,424) +- (0,77)
(6,600) +- (0,0)
(8,600) +- (0,0)
(10,600) +- (0,0)
(12,600) +- (0,0)
(14,600) +- (0,0)
(16,600) +- (0,0)
(18,600) +- (0,0)
(20,600) +- (0,0)
    };

\addplot+[blue,mark=*,mark options={fill=blue},error bars/.cd,y dir=both,y explicit] coordinates {
(2,18) +- (0,5)
(4,29) +- (0,7)
(6,80) +- (0,9)
(8,114) +- (0,13)
(10,150) +- (0,43)
(12,272) +- (0,50)
(14,289) +- (0,58)
(16,328) +- (0,63)
(18,382) +- (0,48)
(20,398) +- (0,60)
    };

    \legend{\hex{},\hexho{}}
    \end{axis}
  \end{tikzpicture} \\
(a) Predictive accuracies & (b) Learning times
\end{tabular}
\caption{ASP robot waiter experiment results which show learning performance when varying the number of training examples. }
\label{fig:asp-waiter-res}
\end{figure}

\begin{figure}[ht]
\centering
\begin{tabular}{|c|}
\hline
\begin{lstlisting}
f(A,B):-turn_cup_over(A,C),f1(C,B).
f1(A,B):-move_right(A,B),at_end(B).
f1(A,B):-f2(A,C),f1(C,B).
f2(A,B):-wants_coffee(A),pour_coffee(A,B).
f2(A,B):-move_right(A,C),turn_cup_over(C,B).
f2(A,B):-wants_tea(A),pour_tea(A,B).
\end{lstlisting} \\
\hline
\end{tabular}
\caption{An example first-order waiter program learned by Metagol.}
\label{fig:waiter-res-fo}
\end{figure}

\begin{figure}[ht]
\centering
\begin{tabular}{|c|}
\hline
\begin{lstlisting}
f(A,B):-until(A,B,at_end,f1).
f1(A,B):-turn_cup_over(A,C),f2(C,B).
f2(A,B):-f3(A,C),move_right(C,B).
f3(A,B):-ifthenelse(A,B,wants_coffee,pour_coffee,pour_tea).
\end{lstlisting} \\
\hline
\end{tabular}
\caption{An example higher-order waiter program learned by \metaho{}.}
\label{fig:waiter-res-ho}
\end{figure}
\subsection{Chess strategy}
Programming chess strategies is a difficult task for humans \cite{bratko1980representation}.
For example, consider maintaining a wall of pawns to support promotion \cite{harris1988heuristic}.
In this case, we might start by trying to inductively program the simple situation in which a black pawn wall advances without interference from white.
Figure \ref{metafunc:fig:chess_fig} shows such an example, where in the initial state the pawns are at different ranks and in the final state all the pawns have advanced to rank 8 but the other pieces have remained in the initial positions.
In this experiment, we try to learn such strategies.

\begin{figure}[ht]
\centering
\begin{tabular} {cc}
\includegraphics[scale=0.4]{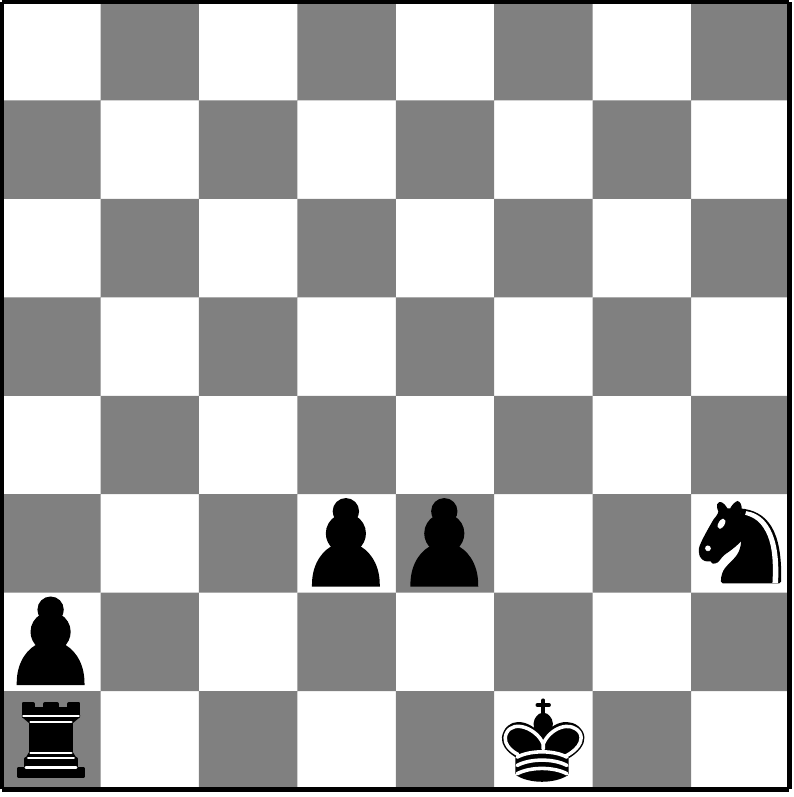} &
\includegraphics[scale=0.4]{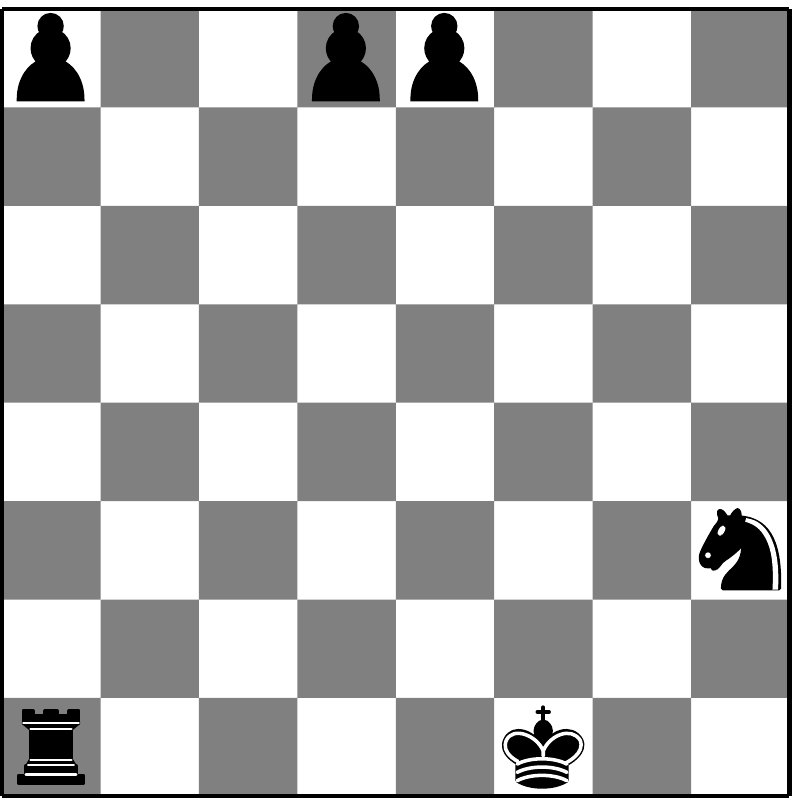} \\
(a) Initial state & (b) Final state
\end{tabular}
\caption{Chess initial/final state example.}
\label{metafunc:fig:chess_fig}
\end{figure}

\subsubsection{Materials}

Examples are \tw{f/2} atoms where the first argument is the initial state and the second is the final state.
A state is a list of pieces, where a piece is denoted as a tuple of the form \tw{(Type,Id,X,Y)}, where \tw{Type} is the type (king=k, pawn=p, etc.), \tw{Id} is a unique identifier, \tw{X} is the file, and \tw{Y} is the rank.
We generate a positive example as follows.
For the initial state for the Prolog experiments, we select a random subset of $n$ pieces from the interval $[1,16]$ and randomly place them on the board.
For the ASP experiments the interval is $[1,5]$. 
For the final state, we update the initial state so that each pawn finishes at rank 8.
To generate negative examples, we repeat the aforementioned procedure but we randomise the final state positions whilst ensuring that the input/output pair is not a positive example.
We use the compiled BK shown in Figure \ref{fig:chess-bk}.

\begin{figure}[ht]
\centering
\begin{tabular}{|c|}
\hline
\begin{lstlisting}
rank8((_,_,_,8)).
not_rank8(A):-
    \+rank8(A).
pawn((p,_,_,_)).
not_pawn(A):-
    \+pawn(A).
head([A|_],A).
tail([_|T],T).
empty([]).
hold(A,A).
forward((Type,Id,X,Y1),(Type,Id,X,Y2)):-
    Y1 < 8,
    Y2 is Y1+1.
\end{lstlisting} \\
\hline
\end{tabular}
\caption{Compiled BK used in the chess experiment.}
\label{fig:chess-bk}.
\end{figure}

\subsubsection{Method}
The experimental method is the same as in Experiment 1.

\subsubsection{Results}

Figure \ref{fig:prolog-chess-res} shows that in all cases \metaho{} learns programs with higher predictive accuracies and lower learning times than Metagol.
Figure \ref{fig:prolog-chess-res} shows that \metaho{} learns programs approaching 100\% accuracy after around six examples.
By contrast, Metagol learns programs with around default accuracy.
Figure \ref{fig:asp-chess-res} shows similar results when comparing \hex{} with \hexho{}.
The poor performance of Metagol and \hex{} is because they both rarely find solutions in the allocated time.
By contrast, \metaho{} and \hexho{} typically learn programs within two seconds.

We can again explain the performance discrepancies by looking at example learned programs in Figure \ref{fig:chess-progs}.
Figure \ref{fig:chess-ho} shows the compact higher-order program typically learned by \metaho{}.
This program is compact because it uses the abstractions \tw{map/3} and \tw{until/4}, where \tw{map/3} decomposes the problem into smaller sub-goals of moving a single piece to rank eight and \tw{until/4} solves the sub-problem of moving a pawn to rank eight.
These sub-goals are solved by the invented \tw{f1/2} predicate.
By contrast, Figure \ref{fig:chess-fo} shows the large target first-order program that Metagol struggled to learn.
As shown in Proposition \ref{prop:hs1}, the MIL hypothesis space grows exponentially in the size of the target hypothesis, which is why the larger first-order program is more difficult to learn.
The results from this experiment suggest that we can reject null hypotheses 1 and 2.


\begin{figure}[ht]
\centering
\begin{tabular} {cc}
\begin{tikzpicture}[scale=0.70]
    \begin{axis}[
    ymin=0,
    ymax=100,
    xmin=2,
    xmax=20,
    xtick={0,2,4,...,20},
    ylabel absolute,
    ylabel style={yshift=-6mm},
    xlabel=No. training examples,
    ylabel=Accuracy (\%),
    legend style={legend pos=south east,font=\small},
    every axis legend/.append style={nodes={right}}]

\addplot+[gray,mark=square*,mark options={color=gray},error bars/.cd,y dir=both,y explicit] coordinates {
(2,58) +- (0,3)
(4,58) +- (0,4)
(6,55) +- (0,3)
(8,59) +- (0,4)
(10,53) +- (0,3)
(12,53) +- (0,3)
(14,51) +- (0,1)
(16,50) +- (0,0)
(18,53) +- (0,3)
(20,50) +- (0,0)
    };

\addplot+[blue,mark=*,mark options={fill=blue},error bars/.cd,y dir=both,y explicit] coordinates {
(2,72) +- (0,7)
(4,89) +- (0,5)
(6,98) +- (0,0)
(8,98) +- (0,0)
(10,97) +- (0,0)
(12,98) +- (0,0)
(14,95) +- (0,3)
(16,98) +- (0,0)
(18,99) +- (0,0)
(20,99) +- (0,0)
    };

\addplot+[mark=none,dotted] coordinates {
    (2,50)
    (20,50)
};

    \legend{Metagol,\metaho{},default}
    \end{axis}
  \end{tikzpicture} &
\begin{tikzpicture}[scale=0.70]
    \begin{axis}[
    ymin=0,
    xmin=2,
    xmax=20,
    xtick={0,2,4,...,20},
    xlabel=No. training examples,
    ylabel=Seconds,
    ylabel absolute,
    ylabel style={yshift=-4mm},
    legend style={legend pos=north west,font=\small},
    every axis legend/.append style={nodes={right}}]

\addplot+[gray,mark=square*,mark options={color=gray},error bars/.cd,y dir=both,y explicit] coordinates {
(2,26) +- (0,25)
(4,94) +- (0,61)
(6,423) +- (0,89)
(8,373) +- (0,93)
(10,540) +- (0,59)
(12,540) +- (0,59)
(14,540) +- (0,59)
(16,600) +- (0,0)
(18,541) +- (0,58)
(20,600) +- (0,0)
    };

\addplot+[blue,mark=*,mark options={fill=blue},error bars/.cd,y dir=both,y explicit] coordinates {
(2,0) +- (0,0)
(4,0) +- (0,0)
(6,0) +- (0,0)
(8,0) +- (0,0)
(10,0) +- (0,0)
(12,0) +- (0,0)
(14,0) +- (0,0)
(16,0) +- (0,0)
(18,0) +- (0,0)
(20,0) +- (0,0)
    };

    \legend{Metagol,\metaho{}}
    \end{axis}
  \end{tikzpicture} \\
(a) Predictive accuracies & (b) Learning times
\end{tabular}
\caption{Prolog chess experimental results which show predictive accuracy when varying the number of training examples. Note that \metaho{} typically learns a program in under two seconds.}
\label{fig:prolog-chess-res}
\end{figure}
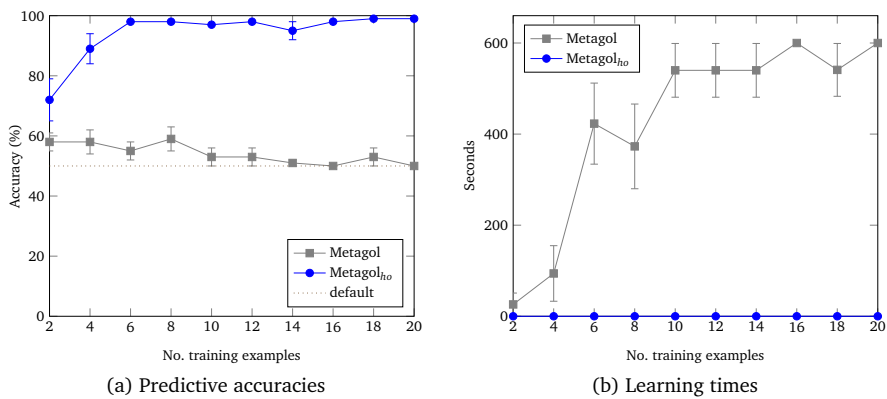

\begin{figure}[ht]
\centering
\begin{tabular} {cc}
\begin{tikzpicture}[scale=0.70]
    \begin{axis}[
    ymin=0,
    ymax=100,
    xmin=2,
    xmax=20,
    xtick={0,2,4,...,20},
    ylabel absolute,
    ylabel style={yshift=-6mm},
    xlabel=No. training examples,
    ylabel=Accuracy (\%),
    legend style={legend pos=south east,font=\small},
    every axis legend/.append style={nodes={right}}]

\addplot+[gray,mark=square*,mark options={color=gray},error bars/.cd,y dir=both,y explicit] coordinates {
(2,55) +- (0,2)
(4,72) +- (0,5)
(6,69) +- (0,6)
(8,68) +- (0,6)
(10,62) +- (0,6)
(12,60) +- (0,5)
(14,50) +- (0,0)
(16,50) +- (0,0)
(18,50) +- (0,0)
(20,50) +- (0,0)
    };

\addplot+[blue,mark=*,mark options={fill=blue},error bars/.cd,y dir=both,y explicit] coordinates {
(2,89) +- (0,4)
(4,92) +- (0,4)
(6,100) +- (0,0)
(8,97) +- (0,2)
(10,100) +- (0,0)
(12,100) +- (0,0)
(14,100) +- (0,0)
(16,100) +- (0,0)
(18,100) +- (0,0)
(20,100) +- (0,0)
    };

\addplot+[mark=none,dotted] coordinates {
    (2,50)
    (20,50)
};

    \legend{\hex{},\hexho{},default}
    \end{axis}
  \end{tikzpicture} &
\begin{tikzpicture}[scale=0.70]
    \begin{axis}[
    ymin=0,
    xmin=2,
    xmax=20,
    xtick={0,2,4,...,20},
    xlabel=No. training examples,
    ylabel=Seconds,
    ylabel absolute,
    ylabel style={yshift=-4mm},
    legend style={legend pos=north west,font=\small},
    every axis legend/.append style={nodes={right}}]

\addplot+[gray,mark=square*,mark options={color=gray},error bars/.cd,y dir=both,y explicit] coordinates {
(2,0) +- (0,0)
(4,32) +- (0,12)
(6,221) +- (0,71)
(8,401) +- (0,57)
(10,515) +- (0,44)
(12,515) +- (0,59)
(14,600) +- (0,0)
(16,600) +- (0,0)
(18,600) +- (0,0)
(20,600) +- (0,0)
    };

\addplot+[blue,mark=*,mark options={fill=blue},error bars/.cd,y dir=both,y explicit] coordinates {
(2,0) +- (0,0)
(4,0) +- (0,0)
(6,1) +- (0,0)
(8,1) +- (0,0)
(10,1) +- (0,0)
(12,1) +- (0,0)
(14,1) +- (0,0)
(16,2) +- (0,0)
(18,2) +- (0,0)
(20,2) +- (0,0)
    };

    \legend{\hex{},\hexho{}}
    \end{axis}
  \end{tikzpicture} \\
(a) Predictive accuracies & (b) Learning times
\end{tabular}
\caption{ASP chess experimental results which show predictive accuracy when varying the number of training examples. Note that \hexho{} typically learns a program in under two seconds.}
\label{fig:asp-chess-res}
\end{figure}

\begin{figure}[ht]
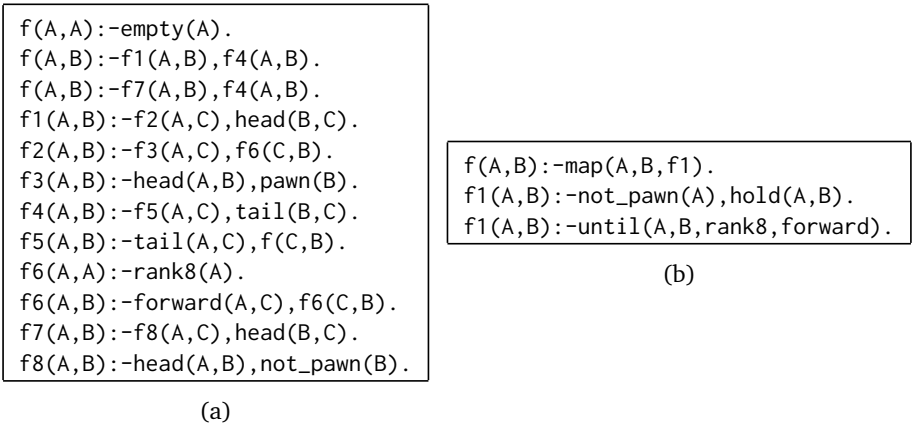

\normalsize
\begin{subfigure}[t]{.5\textwidth}
\centering
\normalsize
\begin{tabular}{|c|}
\hline
\begin{lstlisting}
f(A,A):-empty(A).
f(A,B):-f1(A,B),f4(A,B).
f(A,B):-f7(A,B),f4(A,B).
f1(A,B):-f2(A,C),head(B,C).
f2(A,B):-f3(A,C),f6(C,B).
f3(A,B):-head(A,B),pawn(B).
f4(A,B):-f5(A,C),tail(B,C).
f5(A,B):-tail(A,C),f(C,B).
f6(A,A):-rank8(A).
f6(A,B):-forward(A,C),f6(C,B).
f7(A,B):-f8(A,C),head(B,C).
f8(A,B):-head(A,B),not_pawn(B).
\end{lstlisting} \\
\hline
\end{tabular}
\normalsize
\caption{}
\label{fig:chess-fo}
\end{subfigure}%
\begin{subfigure}[t]{.5\textwidth}
\centering
\begin{tabular}{|c|}
\hline
\begin{lstlisting}
f(A,B):-map(A,B,f1).
f1(A,B):-not_pawn(A),hold(A,B).
f1(A,B):-until(A,B,rank8,forward).
\end{lstlisting} \\
\hline
\end{tabular}
\caption{}
\label{fig:chess-ho}
\end{subfigure}
\caption{Figure (a) shows the target first-order chess program, which Metagol could not learn within 10 minutes. Figure (b) shows the higher-order program often learned by \metaho{}. The higher-order program is clearly smaller than the first-order program, which is why \metaho{} could typically learn it within a couple of seconds.}
\label{fig:chess-progs}
\end{figure}
\subsection{Droplast}
\label{sec:droplast}

In this experiment, the goal is to learn a program that drops the last element from each sublist of a given list-of-lists -- a problem frequently used to evaluate program induction systems \cite{igor2}.
In this experiment, we try to learn a program that drops the last character from each string in a list of strings.
Figure \ref{fig:droplast-exs} shows input/output examples for this problem described using the \tw{f/2} predicate.

\begin{figure}[ht]
\centering
\normalsize
\begin{tabular}{|c|}
\hline
\begin{lstlisting}
f([alice,bob,carol],[alic,bo,caro]).
f([inductive,logic,programming],[inductiv,logi,programmin]]).
f([ferrara,orleans,london,kyoto],[ferrar,orlean,londo,kyot]).
\end{lstlisting} \\
\hline
\end{tabular}
\caption{
Examples of the droplast problem.
Note that in the experimental code we treat strings as lists of individual symbols, e.g. \tw{alice} is represented as \tw{[a,l,i,c,e]}.
}
\label{fig:droplast-exs}
\end{figure}

\subsubsection{Materials}
Examples are \tw{f/2} atoms where the first argument is the initial list and the second is the final list.
We generate positive examples as follows.
For the Prolog experiments, to form the input, we select a random integer $i$ from the interval $[1,10]$ as the number of sublists.
For each sublist $i$, we select a random integer $k$ from the interval $[1,10]$ and then sample with replacement a sequence of $k$ letters from the alphabet a-z to form the sublist $i$.
To form the output, we wrote a Prolog program to drop the last element from each sublist.
For the ASP experiments the interval for $i$ and $k$ is $[1,5]$.
We generate negative examples using a similar procedure, but instead of dropping the last element from each sublist,
we drop $j$ random elements (but not the last one) from each sublist, where $1 < j < k$.
We use the compiled BK shown in Figure \ref{fig:dl-bk}.


\begin{figure}[ht]
\centering
\normalsize
\begin{tabular}{|c|}
\hline
\begin{lstlisting}
head([H|_],H).
tail([_|T],T).
empty([]).
reverse(A,B):- ...
\end{lstlisting} \\
\hline
\end{tabular}
\caption{Compiled BK used in the droplast experiment.}
\label{fig:dl-bk}
\end{figure}

\subsubsection{Method}
The experimental method is the same as in Experiment 1.

\subsubsection{Results}
Figure \ref{fig:prolog-droplast-res} shows that \metaho{} achieved 100\% accuracy after two examples at which point it learned the program shown in Figure \ref{fig:dl-prog1}.
This program again uses abstractions to decompose the problem.
The predicate \tw{f/2} maps over the input list and applies \tw{f1/2} to each sublist to form the output list, thus abstracting away the reasoning for iterating over a list.
The invented predicate \tw{f1/2} drops the last element from a single list by reversing the list, calling \tw{tail/2} to drop the head element, and then reversing the shortened list back to the original order.
By contrast, Metagol was unable to learn any solutions because the corresponding first-order program is too long and the search is impractical, similar to the issues in the chess experiment.

Figure \ref{fig:asp-droplast-res} shows slightly unexpected results for the ASP experiment.
The figure shows that \hexho{} learns programs with higher predictive accuracies than \hex{} when given up to 14 training examples.
However, the predictive accuracies of \hexho{} progressively decreases given more examples.
This performance degradation is because, as we have already explained, \hex{} and \hexho{} do not scale well given more examples.
This inability to scale given more examples is clearly shown in Figure \ref{fig:asp-droplast-res}, which shows that the learning times of \hexho{} increase significantly given more training examples.


We repeated the \emph{droplast} experiment but replaced \tw{reverse/2} in the BK with the higher-order definition \tw{reduceback/3} and the compiled clause \tw{concat/3}.
In this scenario, \metaho{} learned the higher-order program shown in Figure \ref{fig:dl-prog2}.
This program now includes the invented predicate \tw{f3/2} which reverses a given list and is used twice in the program.
This more complex program highlights invention through the repeated calls to \tw{f3/2} and abstraction through the use of higher-order functions.

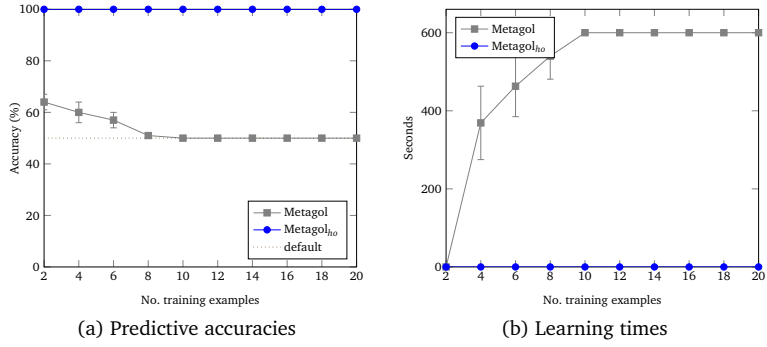
\begin{figure}[ht]
\centering
\begin{tabular} {cc}
\begin{tikzpicture}[scale=0.60]
    \begin{axis}[
    ymin=0,
    ymax=100,
    xmin=2,
    xmax=20,
    xtick={0,2,4,...,20},
    ylabel absolute,
    ylabel style={yshift=-6mm},
    xlabel=No. training examples,
    ylabel=Accuracy (\%),
    legend style={legend pos=south east,font=\small},
    every axis legend/.append style={nodes={right}}]

\addplot+[gray,mark=square*,mark options={color=gray},error bars/.cd,y dir=both,y explicit] coordinates {
(2,64) +- (0,3)
(4,60) +- (0,4)
(6,57) +- (0,3)
(8,51) +- (0,1)
(10,50) +- (0,0)
(12,50) +- (0,0)
(14,50) +- (0,0)
(16,50) +- (0,0)
(18,50) +- (0,0)
(20,50) +- (0,0)
    };

\addplot+[blue,mark=*,mark options={fill=blue},error bars/.cd,y dir=both,y explicit] coordinates {
(2,100) +- (0,0)
(4,100) +- (0,0)
(6,100) +- (0,0)
(8,100) +- (0,0)
(10,100) +- (0,0)
(12,100) +- (0,0)
(14,100) +- (0,0)
(16,100) +- (0,0)
(18,100) +- (0,0)
(20,100) +- (0,0)
    };

\addplot+[mark=none,dotted] coordinates {
    (2,50)
    (20,50)
};

    \legend{Metagol,\metaho{},default}
    \end{axis}
  \end{tikzpicture} &
\begin{tikzpicture}[scale=0.60]
    \begin{axis}[
    ymin=0,
    xmin=2,
    xmax=20,
    xtick={0,2,4,...,20},
    xlabel=No. training examples,
    ylabel=Seconds,
    ylabel absolute,
    ylabel style={yshift=-4mm},
    legend style={legend pos=north west,font=\small},
    every axis legend/.append style={nodes={right}}]

\addplot+[gray,mark=square*,mark options={color=gray},error bars/.cd,y dir=both,y explicit] coordinates {
(2,0) +- (0,0)
(4,369) +- (0,94)
(6,463) +- (0,78)
(8,540) +- (0,59)
(10,600) +- (0,0)
(12,600) +- (0,0)
(14,600) +- (0,0)
(16,600) +- (0,0)
(18,600) +- (0,0)
(20,600) +- (0,0)
    };

\addplot+[blue,mark=*,mark options={fill=blue},error bars/.cd,y dir=both,y explicit] coordinates {
(2,0) +- (0,0)
(4,0) +- (0,0)
(6,0) +- (0,0)
(8,0) +- (0,0)
(10,0) +- (0,0)
(12,0) +- (0,0)
(14,0) +- (0,0)
(16,0) +- (0,0)
(18,0) +- (0,0)
(20,0) +- (0,0)
    };

    \legend{Metagol,\metaho{}}
    \end{axis}
  \end{tikzpicture} \\
(a) Predictive accuracies & (b) Learning times
\end{tabular}
\caption{Prolog droplast experimental results which show predictive accuracy when varying the number of training examples. Note that \metaho{} typically learns a program in under two seconds.}
\label{fig:prolog-droplast-res}
\end{figure}

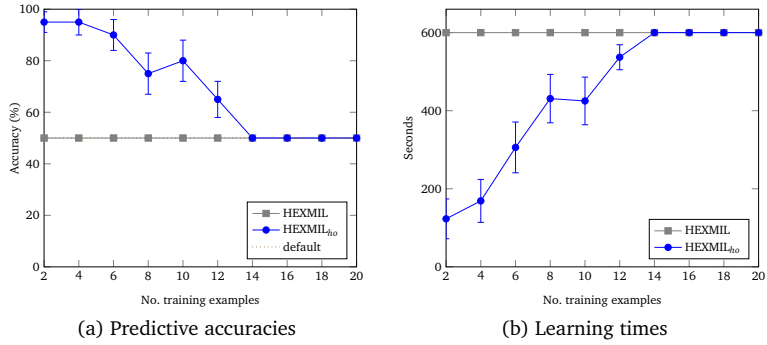
\begin{figure}[ht]
\centering
\begin{tabular} {cc}
\begin{tikzpicture}[scale=0.60]
    \begin{axis}[
    ymin=0,
    ymax=100,
    xmin=2,
    xmax=20,
    xtick={0,2,4,...,20},
    ylabel absolute,
    ylabel style={yshift=-6mm},
    xlabel=No. training examples,
    ylabel=Accuracy (\%),
    legend style={legend pos=south east,font=\small},
    every axis legend/.append style={nodes={right}}]

\addplot+[gray,mark=square*,mark options={color=gray},error bars/.cd,y dir=both,y explicit] coordinates {
(2,50) +- (0,0)
(4,50) +- (0,0)
(6,50) +- (0,0)
(8,50) +- (0,0)
(10,50) +- (0,0)
(12,50) +- (0,0)
(14,50) +- (0,0)
(16,50) +- (0,0)
(18,50) +- (0,0)
(20,50) +- (0,0)
    };

\addplot+[blue,mark=*,mark options={fill=blue},error bars/.cd,y dir=both,y explicit] coordinates {
(2,95) +- (0,4)
(4,95) +- (0,5)
(6,90) +- (0,6)
(8,75) +- (0,8)
(10,80) +- (0,8)
(12,65) +- (0,7)
(14,50) +- (0,0)
(16,50) +- (0,0)
(18,50) +- (0,0)
(20,50) +- (0,0)
    };

\addplot+[mark=none,dotted] coordinates {
    (2,50)
    (20,50)
};

    \legend{\hex{},\hexho{},default}
    \end{axis}
  \end{tikzpicture} &
\begin{tikzpicture}[scale=0.60]
    \begin{axis}[
    ymin=0,
    xmin=2,
    xmax=20,
    xtick={0,2,4,...,20},
    xlabel=No. training examples,
    ylabel=Seconds,
    ylabel absolute,
    ylabel style={yshift=-4mm},
    legend style={legend pos=south east,font=\small},
    every axis legend/.append style={nodes={right}}]

\addplot+[gray,mark=square*,mark options={color=gray},error bars/.cd,y dir=both,y explicit] coordinates {
(2,600) +- (0,0)
(4,600) +- (0,0)
(6,600) +- (0,0)
(8,600) +- (0,0)
(10,600) +- (0,0)
(12,600) +- (0,0)
(14,600) +- (0,0)
(16,600) +- (0,0)
(18,600) +- (0,0)
(20,600) +- (0,0)
    };

\addplot+[blue,mark=*,mark options={fill=blue},error bars/.cd,y dir=both,y explicit] coordinates {
(2,123) +- (0,51)
(4,169) +- (0,55)
(6,306) +- (0,65)
(8,431) +- (0,62)
(10,425) +- (0,61)
(12,537) +- (0,32)
(14,600) +- (0,0)
(16,600) +- (0,0)
(18,600) +- (0,0)
(20,600) +- (0,0)
    };

    \legend{\hex{},\hexho{}}
    \end{axis}
  \end{tikzpicture} \\
(a) Predictive accuracies & (b) Learning times
\end{tabular}
\caption{ASP droplast experimental results which show predictive accuracy when varying the number of training examples.}
\label{fig:asp-droplast-res}
\end{figure}

\begin{figure}[ht]
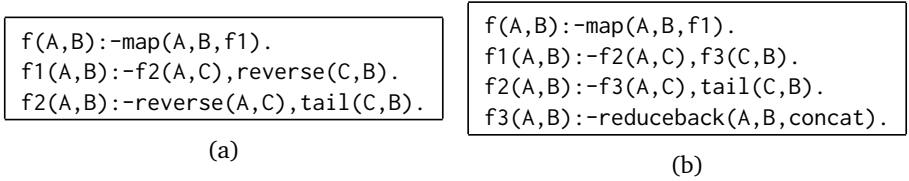

\normalsize
\begin{subfigure}[t]{.5\textwidth}
\centering
\normalsize
\begin{tabular}{|c|}
\hline
\begin{lstlisting}
f(A,B):-map(A,B,f1).
f1(A,B):-f2(A,C),reverse(C,B).
f2(A,B):-reverse(A,C),tail(C,B).
\end{lstlisting} \\
\hline
\end{tabular}
\normalsize
\caption{}
\label{fig:dl-prog1}
\end{subfigure}%
\begin{subfigure}[t]{.5\textwidth}
\centering
\begin{tabular}{|c|}
\hline
\begin{lstlisting}
f(A,B):-map(A,B,f1).
f1(A,B):-f2(A,C),f3(C,B).
f2(A,B):-f3(A,C),tail(C,B).
f3(A,B):-reduceback(A,B,concat).
\end{lstlisting} \\
\hline
\end{tabular}
\caption{}
\label{fig:dl-prog2}
\end{subfigure}
\caption{Figure (a) shows the higher-order program often learned by \metaho{}.
Figure (b) shows a more complex program learned by \metaho{} when we repeated the experiment but disallowed \metaho{} to use \tw{reverse/2} and instead gave it \tw{reduceback/3} and \tw{concat/3}.}
\label{fig:dl-progs}
\end{figure}

\subsubsection{Further discussion}

To further demonstrate invention and abstraction, consider learning a \emph{double droplast} program which extends the droplast problem so that, in addition to dropping the last element from each sublist, it also drops the last sublist.
Figure \ref{fig:ddl-exs} shows examples of this problem, again represented as the target predicate \tw{f/2}.
Given two examples of this problem, \metaho{} learns the program shown in Figure \ref{fig:ddl1}.
For readability Figure \ref{fig:ddl2} shows the folded program where non-reused invented predicates are removed.
This program is similar to the program shown in Figure \ref{fig:dl-prog2} but it makes an additional final call to the invented predicate \tw{f1/2} which is used twice in the program, once as a higher-order argument in \tw{map/3} and again as a first-order predicate.
This form of higher-order abstraction and invention goes beyond anything in the existing literature.

\begin{figure}[ht]
\normalsize
\centering
\begin{tabular}{|c|}
\hline
\begin{lstlisting}
f([alice,bob,carol],[alic,bo]).
f([inductive,logic,programming],[inductiv,logi]]).
f([ferrara,orleans,london,kyoto],[ferrar,orlean,londo]).
\end{lstlisting} \\
\hline
\end{tabular}
\caption{Examples of the more complex double droplast problem.}
\label{fig:ddl-exs}
\end{figure}

\begin{figure}[ht]
\normalsize
\begin{subfigure}[t]{.5\textwidth}
\centering
\normalsize
\begin{tabular}{|c|}
\hline
\begin{lstlisting}
f(A,B):-f1(A,C),f2(C,B).
f1(A,B):-map(A,B,f2).
f2(A,B):-f3(A,C),f4(C,B).
f3(A,B):-f4(A,C),tail(C,B).
f4(A,B):-reduceback(A,B,concat).
\end{lstlisting} \\
\hline
\end{tabular}
\normalsize
\caption{}
\label{fig:ddl1}
\end{subfigure}%
\begin{subfigure}[t]{.5\textwidth}
\centering
\begin{tabular}{|c|}
\hline
\begin{lstlisting}
f(A,B):-map(A,C,f1),f1(C,B).
f1(A,B):-f2(A,C),tail(C,D),f2(D,B).
f2(A,B):-reduceback(A,B,concat).
\end{lstlisting} \\
\hline
\end{tabular}
\caption{}
\label{fig:ddl2}
\end{subfigure}
\caption{Figure (a) shows a the higher-order \emph{double droplast} program learned by \metaho{}.
For readability Figure (b) shows the folded program in which non-reused invented predicates are removed.
Note how in Figure (b) the predicate symbol \tw{f1/2} is used both as an argument to \tw{map/3} and as a standard literal in the clause defined by the head \tw{f(A,B)}.
}
\end{figure}
\subsection{Encryption}
\label{sec:encryption}

In this final experiment, we revisit the encryption example from the introduction.

\subsubsection{Materials}
Examples are \tw{f/2} atoms where the first argument is the encrypted string and the second is the unencrypted string.
For simplicity we only allow the letters a-z.
We generate a positive example as follows.
For the Prolog experiments we select a random integer $k$ from the interval $[1,20]$ to denote the unencrypted string length.
For the ASP experiments we select $k$ from the interval $[1,5]$. 
We sample with replacement a sequence $y$ of length $k$ from the set $\{a,b,\dots,z\}$.
The sequence $y$ denotes the unencrypted string.
We form the encrypted string $x$ by shifting each character in $y$ two places to the right, e.g. $a\mapsto c, b\mapsto d, \dots, z \mapsto b$.
The atom $f(x,y)$ thus represents a positive example.
To generate negative examples we repeat the aforementioned procedure but we shift each character by $n$ places where $0 \leq n < 25$ and $n \neq 2$.
For the BK we use the relations \tw{char\_to\_int/2}, \tw{int\_to\_char/2}, \tw{succ/2}, and \tw{prec/2}, where, for simplicity, \tw{succ(25,0)} and \tw{prec(0,25)} hold.

\subsubsection{Method}
The experimental method is the same as in Experiment 1.

\subsubsection{Results}
Figure \ref{fig:prolog-encryption-res} shows that, as with the other experiments, \namea{} learns programs with higher predictive accuracies and lower learning times than Metagol.
These results are as expected because, as shown in Figure \ref{fig:m22fo}, to represent the target encryption hypothesis as a first-order program \M{2}{2} requires seven clauses.
By contrast, as shown in Figure \ref{fig:m22ho}, to represent the target hypothesis as a higher-order program in \M{2}{2} requires three clauses with one additional higher-order variable in the \tw{map/3} abstraction.

We attempted to run the experiment using \hex{} and \hexho{}.
However, both systems failed to find any programs within the timelimit.
In fact, even in an extremely simple version of the experiment (where the alphabet contained only 10 letters, each string had at most 3 letters, and the character shift was +1) both systems failed to learn anything in the allocated time.
Our theoretical results in Section \ref{sec:complex} explain these empirical results.
In this scenario, the number of ways that the BK predicates can be chained together and instantiated is no longer tractable for \hex{}.
The experiment suggests that \hex{} needs to be better at determining which groundings are relevant to consistent hypotheses.

\begin{figure}[ht]
\centering
\begin{tabular} {cc}
\begin{tikzpicture}[scale=0.65]
    \begin{axis}[
    ymin=0,
    ymax=100,
    xmin=2,
    xmax=20,
    xtick={0,2,4,...,20},
    ylabel absolute,
    ylabel style={yshift=-6mm},
    xlabel=No. training examples,
    ylabel=Accuracy (\%),
    legend style={legend pos=south east,font=\small},
    every axis legend/.append style={nodes={right}}]

\addplot+[gray,mark=square*,mark options={color=gray},error bars/.cd,y dir=both,y explicit] coordinates {
(2,50) +- (0,0)
(4,50) +- (0,0)
(6,50) +- (0,0)
(8,50) +- (0,0)
(10,50) +- (0,0)
(12,50) +- (0,0)
(14,50) +- (0,0)
(16,50) +- (0,0)
(18,50) +- (0,0)
(20,50) +- (0,0)
    };

\addplot+[blue,mark=*,mark options={fill=blue},error bars/.cd,y dir=both,y explicit] coordinates {
(2,75) +- (0,8)
(4,90) +- (0,6)
(6,100) +- (0,0)
(8,100) +- (0,0)
(10,100) +- (0,0)
(12,100) +- (0,0)
(14,100) +- (0,0)
(16,100) +- (0,0)
(18,100) +- (0,0)
(20,100) +- (0,0)
    };

\addplot+[mark=none,dotted] coordinates {
    (2,50)
    (20,50)
};

    \legend{Metagol,\namea{},default}
    \end{axis}
  \end{tikzpicture} &
\begin{tikzpicture}[scale=0.65]
    \begin{axis}[
    ymin=0,
    xmin=2,
    xmax=20,
    xtick={0,2,4,...,20},
    xlabel=No. training examples,
    ylabel=Seconds,
    ylabel absolute,
    ylabel style={yshift=-4mm},
    legend style={legend pos=north west,font=\small},
    every axis legend/.append style={nodes={right}}]

\addplot+[gray,mark=square*,mark options={color=gray},error bars/.cd,y dir=both,y explicit] coordinates {
(2,180) +- (0,91)
(4,420) +- (0,91)
(6,600) +- (0,0)
(8,600) +- (0,0)
(10,600) +- (0,0)
(12,600) +- (0,0)
(14,600) +- (0,0)
(16,600) +- (0,0)
(18,600) +- (0,0)
(20,600) +- (0,0)
    };

\addplot+[blue,mark=*,mark options={fill=blue},error bars/.cd,y dir=both,y explicit] coordinates {
(2,18) +- (0,10)
(4,43) +- (0,26)
(6,6) +- (0,3)
(8,32) +- (0,23)
(10,14) +- (0,10)
(12,13) +- (0,12)
(14,20) +- (0,9)
(16,19) +- (0,13)
(18,5) +- (0,3)
(20,7) +- (0,5)
    };

    \legend{Metagol,\namea{}}
    \end{axis}
  \end{tikzpicture} \\
(a) Predictive accuracies & (b) Learning times
\end{tabular}
\caption{Prolog encryption experiment results which show learning performance when varying the number of training examples. }
\label{fig:prolog-encryption-res}
\end{figure}
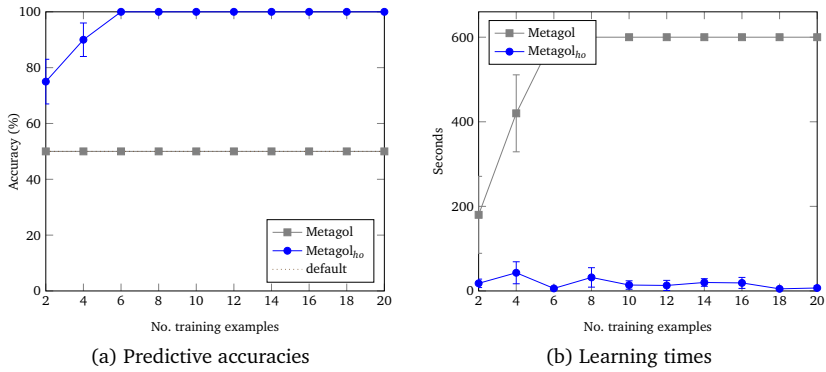

\subsection{Discussion}

Our main claim is that compared to learning first-order programs, learning higher-order programs can improve learning performance. Our experiments support this claim and show that learning higher-order programs can significantly improve predictive accuracies and reduce learning times.

Although it was not our purpose, our experiments also implicitly (implicitly because we do not directly compare the systems) show that Metagol outperforms HEXMIL, and similarly \namea{} outperforms \nameb{}.
Our empirical results contradict those by Kaminski et al. \cite{hexmil}, but support those by Morel et al. \cite{crop:typed}.
There are multiple explanations for this discrepancy.
We think that the main problem is the ASP grounding problem faced by HEXMIL: in most of our experiments, HEXMIL timed out during the grounding (and not solving) stage.
To alleviate this issue, future work could consider using state abstraction \cite{hexmil} to mitigate the grounding issues.

Also, by adjusting the experimental methodology, some of the results may change.
For instance, Kaminski et al. showed that HEXMIL can sometimes learn solutions quicker than Metagol because of conflict propagation in ASP.
They claim that this performance improvement is because Metagol only considers negative examples after inducing a program from the positive examples (as described in Section \ref{sec:metagol}).
Therefore, HEXMIL should benefit from more negative examples, but may suffer from fewer.

To summarise, although our empirical results suggest that Metagol outperforms HEXMIL, future work should more rigorously compare the two approaches on multiple domains along multiple dimensions (e.g. varying the numbers of examples, size of BK, etc.).
\section{Conclusions and further work}
\label{conc}

We have extended MIL to support learning higher-order programs by allowing for higher-order definitions to be included as background knowledge.
We showed that learning higher-order programs can reduce the textual complexity required to express target classes of programs which in turn reduces the hypothesis space.
Our sample complexity results show that learning higher-order programs can reduce the number of examples required to reach high predictive accuracies.
To learn higher-order programs, we introduced \metaho{}, a MIL learner which also supports higher-order predicate invention, such as inventing predicates for the higher-order abstractions \tw{map/3} and \tw{until/4}.
We also introduced \hexho{}, an ASP implementation of MIL that also supports learning higher-order programs.
Our experiments showed that, compared to learning first-order programs, learning higher-order programs can significantly improve predictive accuracies and reduce learning times.

\subsection{Limitations and future work}

\subsubsection{Metarules}

There are at least two limitations with our work regarding the choice of metarules when learning higher-order programs.

One issue is deciding which metarules to use.
Figure \ref{fig:bk} shows the 11 metarules used in our experiments.
Of these metarules, eight of them (the ones with only monadic or dyadic literals) are a subset of a \emph{derivationally} irreducible set of monadic and dyadic metarules \cite{crop:dreduce}.
We can therefore justify their selection because they are sufficient to learn any program in a slightly restricted subset of Datalog.
However, we have additionally used three \emph{curry} metarules with arities three, four, and five, which were not considered in the work on identifying derivationally irreducible metarules.
In addition, the curry metarules also include existentially quantified predicate arguments (e.g. $R$ in $P(A,B) \leftarrow Q(A,B,R)$).
Although these metarules seem intuitive and sensible to use, we have no theoretical justification for using them.
Future work should address this issue, such as by extending the existing work \cite{crop:dreduce} to include such metarules.

A second issue regarding the curry metarules is that when used with abstractions they each require an extra clause in the learned program.
Our motivation for learning higher-order programs was to reduce the number of clauses necessary to express a target theory.
Although our theoretical and experimental results support this claim, further improvements can be made.
For instance, suppose you are given examples of the concept $f(x,y)$ where $x$ is a list of integers and $y$ is $x$ but reversed, where each element has had one added to it, and then doubled, such as  $f([1,2,3],[8,6,4])$. Then \metaho{} could learn the following program given the metarules used in Figure  \ref{fig:bk}:

\begin{center}
\begin{minipage}{.43\textwidth}
\centering
\begin{lstlisting}[frame=single]
f(A,B):-f1(A,C),f5(C,B).
f1(A,B):-f3(A,C),f4(C,B).
f3(A,B):-reduceback(A,B,concat).
f4(A,B):-map(A,B,succ).
f5(A,B):-map(A,B,double).
\end{lstlisting}
\end{minipage}
\end{center}

\noindent
This program requires five clauses.
By contrast, a more compact representation would be:

\begin{center}
\begin{minipage}{.43\textwidth}
\centering
\begin{lstlisting}[frame=single]
f(A,B):-reduceback(A,C,concat),
    map(C,D,succ),
    map(D,B,double).
\end{lstlisting}
\end{minipage}
\end{center}

\noindent
This more compact program is formed of a single clause and four literals, so should therefore be easier to learn.
Future work should try to address this limitation of the current approach\footnote{This limitation is not specific to when learning higher-order programs. Curry metarules can also be used when learning first-order programs, where existentially quantified argument variables could be bound to constant symbols, rather than predicate symbols. In other words, this issue is a limitation of MIL but manifests itself clearly when learning higher-order programs.}.

\subsubsection{Higher-order definitions}
Our experiments rely on a few higher-order definitions, mostly based on higher-order programming concepts, such as \tw{map/3} and \tw{until/4}.
Future work should consider other higher-order concepts.
For instance, consider learning regular grammars, such as $a^*b^*c^*$.
To improve learning efficiency it would be desirable to encode the concept of \emph{Kleene star operator}\footnote{The Kleene star operator represents zero-or-more repetitions (here applications) of its argument.} as a higher-order definition, such as:

\begin{center}
\begin{minipage}{.25\textwidth}
\centering
\begin{lstlisting}[frame=single]
kstar(P,A,A).
kstar(P,A,B):-
    call(P,A,C),
    kstar(P,C,B).
\end{lstlisting}
\end{minipage}
\end{center}

\noindent
Similarly, we have used abstracted MIL to invent functional constructs.
Future work could consider inventing relational constructs.
For instance, consider this higher-order definition of a closure:

\begin{center}
\begin{tabular}{l}
closure(P,A,B) $\leftarrow$ P(A,B)\\
closure(P,A,B) $\leftarrow$ P(A,C), closure(P,C,B)
\end{tabular}
\end{center}

\noindent
We could use this definition to learn compact abstractions of relations, such as:

\begin{center}
\begin{tabular}{l}
ancestor(A,B) $\leftarrow$ closure(parent,A,B)\\
lessthan(A,B) $\leftarrow$ closure(increment,A,B)\\
subterm(A,B) $\leftarrow$ closure(headortail,A,B)
\end{tabular}
\end{center}



\subsubsection{Learning higher-order abstractions}

One clear limitation of the current approach is that we require user-provided higher-order definitions, such as \tw{map/3}.
In future work we want to learn or invent such definitions.
For instance, when learning a solution to the decryption program in the introduction it may be beneficial to learn and invent a sub-definition that corresponds to \tw{map/3}.
The program below shows such a scenario, where the definition \tw{decrypt1/3} corresponds to \tw{map/3}.

\begin{center}
\begin{minipage}{.37\textwidth}
\centering
\begin{lstlisting}[frame=single]
decrypt(A,B):-
    decrypt1(A,B,decrypt2).
decrypt1(A,B,C):-
    empty(A),
    empty(B).
decrypt1(A,B,C):-
    head(A,D),
    call(C,D,E),
    head(B,E),
    tail(A,F),
    tail(B,G),
    decrypt1(F,G,C).
decrypt2(A,B):-
    char_to_int(A,C),
    prec(C,D),
    int_to_char(D,B).
\end{lstlisting}
\end{minipage}
\end{center}
\noindent
Our preliminary work suggests that learning such definitions is possible.


\subsection{Summary}
In summary, our primary contribution is a demonstration of the value of higher-order abstractions and inventions in MIL.
We have shown that the techniques allow us to learn substantially more complex programs using fewer examples with less search.

\begin{acknowledgements}
We thank Stassa Patsantzis and Tobias Kaminski for helpful feedback on the paper.
\end{acknowledgements}

\bibliographystyle{plain}
\bibliography{manuscript}
\end{document}